\documentclass[10pt,twocolumn,letterpaper]{article}
\usepackage{amsthm}
\usepackage{thmtools}

\newtheorem{theorem}{Theorem}
\usepackage[table]{xcolor}
\usepackage{thm-restate}
\usepackage{booktabs}
\usepackage{svg}

\usepackage{graphicx}
\usepackage{algorithm}
\usepackage{algpseudocode}
\usepackage[table]{xcolor}
\usepackage{xcolor}
\usepackage{amsfonts}
\usepackage{amsmath,amssymb}
\usepackage{multirow}
\usepackage{tcolorbox}
\usepackage{adjustbox}
\tcbset{
  ariaBox/.style={
    colback=blue!3!white,
    colframe=blue!60!black,
    boxrule=0.6pt,
    arc=3pt,
    left=8pt,
    right=8pt,
    top=8pt,
    bottom=8pt
  }
}
\newcounter{infthm}

\newcommand{\cluster}{\mathcal{C}}
\newcommand{\std}[1]{\,\raisebox{0.15ex}{$\scriptscriptstyle\pm#1$}}
\usepackage[pagenumbers]{wacv}      

\definecolor{wacvblue}{rgb}{0.21,0.49,0.74}
\usepackage[pagebackref,breaklinks,colorlinks,allcolors=wacvblue]{hyperref}

\def\wacvPaperID{1400} 
\def\confName{WACV}
\def\confYear{2027}

\title{ARIA: Adaptive Region-Based Importance Allocation for Conditional Diffusion Distillation}

\author{Loay Mualem\textsuperscript{1,2} \quad
Vinh Tong\textsuperscript{1} \quad
Samir Darouich\textsuperscript{1,3} \quad
Mathias Niepert\textsuperscript{1}\\[0.5em]
\textsuperscript{1}Institute for AI, University of Stuttgart \quad
\textsuperscript{2}IMPRS-IS\\
\textsuperscript{3}Institute of Theoretical Chemistry, University of Stuttgart.\\
[0.5em]
\textbf{Please send any questions to:}
\href{mailto:loaymua@gmail.com}{\texttt{loaymua@gmail.com}}
}

\begin{document}
\maketitle

\begin{abstract}
Distilling conditional diffusion models aims to transfer the behavior of a large teacher to a smaller student while preserving alignment across conditioning inputs. Unlike recognition tasks, knowledge distillation in conditional diffusion often struggles to transfer knowledge beyond the training distribution, since the predicted noise strongly depends on the conditioning signal. As a result, effective distillation requires exploring a large conditioning space. 
In practical settings, this creates a major bottleneck. Paired image–condition data may be limited, and generating synthetic images for every available condition is often computationally infeasible, while the pool of conditions, such as text prompts, can be extremely large. Recent work addresses this issue by switching conditions during training, exposing the student to a broader conditioning space without changing the distillation objective. Yet this raises a complementary question: once a large conditioning corpus is available, how should the training effort be allocated? In this work, we introduce ARIA, a framework that adaptively allocates training effort across coarse regions of the conditioning space. By maintaining online estimates of teacher–student discrepancy at the region level, ARIA focuses updates where misalignment persists while preserving the original distillation objective. Empirically, ARIA improves over RC across most architectures and settings, with the clearest gains observed in unseen and underrepresented regimes. We also provide a theoretical analysis showing that the proposed tracking mechanism follows the evolving discrepancy during training under bounded variance and drift assumptions.

\end{abstract}
\section{Introduction}

Diffusion models have achieved strong generative performance across many domains, including images~\cite{ho2020denoising, song2020score, dhariwal2021diffusion, batifol2025flux, rombach2022high, nichol2021glide}, video~\cite{ho2022imagen, gupta2024photorealistic, blattmann2023stable}, audio~\cite{kong2020diffwave, liu2023audioldm}, and robotics~\cite{chi2025diffusion,carvalho2025motion,chi2025diffusion}. These models iteratively transform noise into samples from a target distribution. Large-scale text-to-image systems such as Stable Diffusion~\cite{Rombach_2022_CVPR} generate images that follow natural-language prompts, however, their strong performance typically requires large models and many sampling steps, motivating efforts to develop more efficient variants.

Knowledge distillation (KD) is a widely used approach for compressing large models by training a smaller student to mimic the predictions or intermediate representations of a teacher~\cite{hinton2014distilling,yang2021knowledge,zhao2022decoupled,zhou2021rethinking, yin2024one, luo2024one, xie2024distillation, zhou2024score, sauer2024adversarial, nguyen2024swiftbrush, dao2024swiftbrush}. In text-to-image diffusion models, this typically involves sampling image–text pairs, generating noisy intermediates along the diffusion trajectory, and training the student to match the teacher’s outputs at each timestep~\cite{hinton2014distilling,yang2021knowledge,zhao2022decoupled,zhou2021rethinking} or feature representations~\cite{yang2021knowledge,zhao2022decoupled,zhou2021rethinking,chen2021distilling,li2022knowledge}. While KD can transfer knowledge beyond the training distribution in recognition models~\cite{chen2021distilling,li2022knowledge}, this effect is weaker in conditional diffusion models~\cite{kim2025random}, where the predicted noise is strongly condition-dependent. As a result, effectively distilling the teacher often requires exploring a large portion of the conditioning space.

In practice, access to large-scale paired image–text datasets is often limited due to copyright, privacy, and licensing constraints. Generating synthetic images with the teacher model is a possible alternative, but large-scale diffusion sampling is computationally expensive and storage-intensive~\cite{zhou2024score, sauer2024adversarial}. In contrast, textual data is abundant: large collections of prompts and captions are widely available at negligible cost. This creates a fundamental imbalance—while the conditioning space (text) can be extremely large, the number of available images for distillation is typically constrained by computational resources.



To address this mismatch, Kim et al.~\cite{kim2025random} proposed Random Conditioning (RC), which pairs noisy images with randomly sampled text conditions, enabling exploration of the conditioning space without generating images for every prompt.
RC answers an important feasibility question: can auxiliary text improve diffusion distillation when paired image--prompt data is limited?
Our work asks the complementary allocation question: once a large auxiliary text pool is available, how should the training budget be distributed over it?
A static coverage strategy treats the auxiliary conditioning space as a fixed resource and samples it uniformly.
However, the usefulness of a region is student-dependent and changes during training, as some regions are learned quickly, whereas others continue to exhibit a large teacher--student gap. To this end we introduce ARIA (\textit{Adaptive Region-based Importance Allocation}), which uses auxiliary text not only to expand coverage, but to provide adaptive feedback. ARIA groups auxiliary conditions into coarse regions, tracks their teacher--student discrepancy online, and reallocates sampling toward regions where supervision is currently most needed. 


\begin{figure*}[t]
    \centering \includegraphics[width=\textwidth,keepaspectratio]{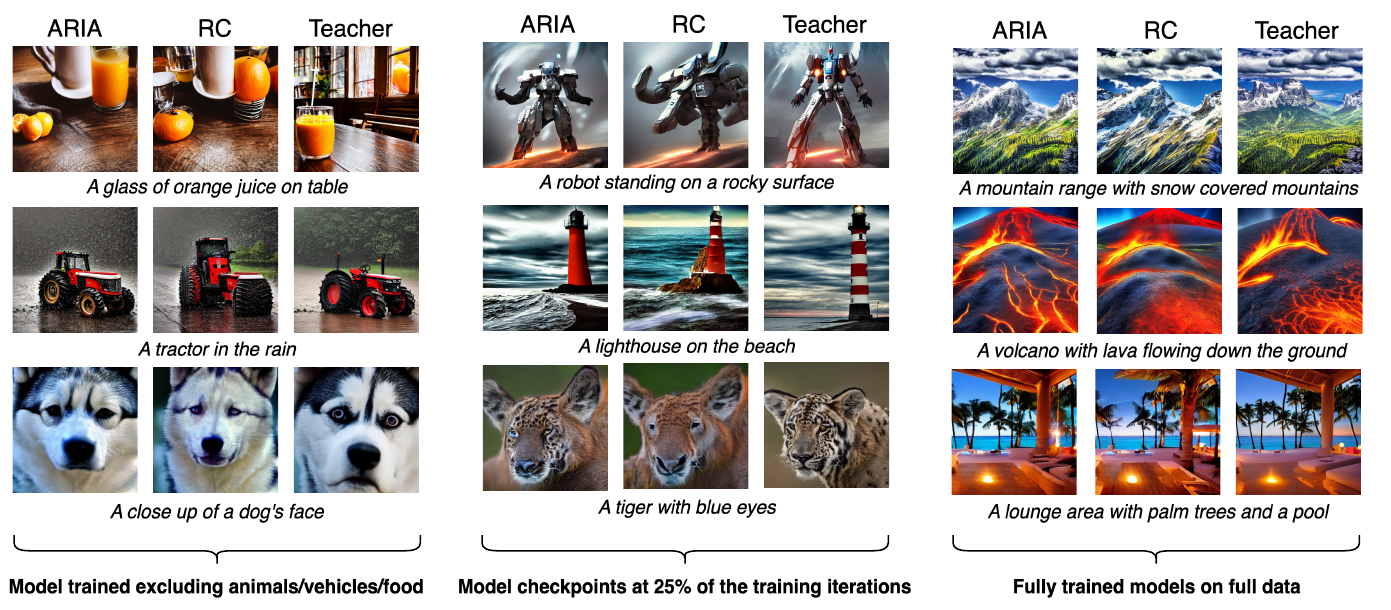} 
    \caption{Side-by-side comparison of images generated by ARIA, RC, and Teacher models under three settings: training without animals, vehicles, or food; using $25\%$ training checkpoints; and fully trained on the complete dataset.
    }
    \label{fig:aria_comparison}
\end{figure*}

Concretely, ARIA is a lightweight replacement for the condition-selection step in RC. The paired-image cache, teacher--student objective, and model architectures remain unchanged, while the auxiliary prompt sampler becomes discrepancy-aware. We instantiate ARIA for text-conditioned diffusion model distillation in the regime of limited paired images and abundant auxiliary text. ARIA leaves the underlying distillation objective unchanged and only modifies how conditions are sampled. We provide theoretical analysis of the region-level discrepancy tracking mechanism and demonstrate that ARIA improves over random conditioning across most architectures and data regimes, with the strongest gains in unseen and underrepresented settings, and faster convergence.

\textbf{Contributions.} Our main contributions are as follows:
(1) We introduce ARIA, a region-based framework for adaptive importance allocation that prioritizes coarse condition regions with higher discrepancy. Unlike per-sample importance sampling, ARIA tracks discrepancy at the level of coarse condition regions, enabling scalable adaptive allocation over extremely large input spaces without per-sample scoring.
(2) We instantiate ARIA for text-to-image diffusion distillation with limited paired images and abundant auxiliary text, achieving consistent improvements over random sampling, particularly for unseen prompts and imbalanced conditioning distributions.
(3) We present extensive empirical evaluation demonstrating stable training behavior and robustness across architectures and region constructions including SD~1.4, SD~2.1, and SDXL teachers with block-pruned, channel-pruned, and KOALA students.
(4) We provide theoretical analysis of ARIA’s EMA-based region scoring rule, establishing finite-time guarantees for tracking evolving discrepancy under noise and temporal drift.

\textbf{Organization.} Section~\ref{sec:related_work} reviews related work, and Section~\ref{sec:problem_setting} formalizes the distillation setting and RC baseline.
Section~\ref{sec:aria} introduces ARIA and presents its algorithmic formulation together with the tracking analysis.
Section~\ref{sec:exp} applies ARIA to text-to-image distillation under two experimental setups, demonstrating performance improvements and robustness.
We conclude in Section~\ref{sec:conclusion} with a discussion and future directions.

\section{Related Work}

\label{sec:related_work}
\textbf{Compressed Diffusion Models.}
Modern diffusion models are typically large and require substantial computational and memory resources for both training and inference. Several works aim to reduce the computational and memory cost of diffusion models through model compression. 
Existing approaches include quantization~\cite{shang2023post}, architecture optimization~\cite{li2023snapfusion}, pruning~\cite{guo2025mosaicdiff,kwon2025hierarchicalprune,zhu2024dip,xie2025sana,chen2025sana} and knowledge distillation~\cite{xiang2025dkdm,kim2025random,kim2024bk}. 
For example, BK-SDM~\cite{kim2024bk} applies knowledge distillation together with block pruning to compress Stable Diffusion~\cite{Rombach_2022_CVPR}, while KOALA~\cite{lee2024koala} performs layer-wise compression and knowledge distillation for SDXL~\cite{podellsdxl}.

\textbf{Knowledge Distillation for Model Compression.}
Knowledge distillation (KD)~\cite{hinton2014distilling} is widely used for training compact models by transferring knowledge from a larger teacher through soft predictions or intermediate representations~\cite{yang2021knowledge,zhao2022decoupled,zhou2021rethinking,chen2021distilling,li2022knowledge}. 
KD has been successfully applied across many domains, including language models, vision transformers, and diffusion models~\cite{sun2019patient,jiao2020tinybert,hao2022learning,touvron2021training,kim2024bk,kim2025random,xiang2025dkdm}. 
In diffusion models, KD is commonly used to train compressed architectures. However, Kim et al.~\cite{kim2025random} show that conventional distillation strategies may struggle to transfer knowledge for underrepresented or uncovered concepts, motivating improved data allocation strategies during training. 

\textbf{Knowledge Distillation for Decreasing Sampling Steps.}
Several works focused on accelerating the denoising process in diffusion models without retraining~\cite{lu2022dpm, zhao2023unipc, zhang2022fast, zheng2023dpm}, reducing sampling steps from thousands to as few as 10–25. However, pushing this reduction further typically leads to substantial performance degradation. Distillation-based acceleration methods~\cite{salimans2022progressive, song2023consistency, tong2024learning, dao2024swiftbrush, nguyen2024swiftbrush, zhou2024score, luo2024one, xie2024distillation, yin2024one} address this by training student models to compress multi-step denoising trajectories into fewer steps, sometimes even a single step. Importantly, these methods focus on reducing sampling steps rather than compressing model capacity. In contrast, our work targets the compression of the base diffusion model itself, providing a compact foundation that can naturally complement and strengthen step-acceleration techniques.

\section{Problem Setting}\label{sec:problem_setting}
We study the distillation of a pretrained text-to-image diffusion model (the \textit{teacher}) into a smaller student model as presented in~\cite{kim2025random,kim2024bk}.
For clarity, we use the noise prediction objective, though the same derivation applies to score, velocity, or data prediction.
Let $\epsilon_\mathcal{T}(x_t, t, c)$ denote the teacher and $\epsilon_\mathcal{S}(x_t, t, c)$ the student, where $x_t$ is a noisy latent at timestep $t$ and $c$ is a text condition.

\textbf{Conditional Distillation Loss.}
A straightforward strategy for image-free distillation is to first synthesize
images conditioned on text prompts and construct a paired dataset
\(
\mathcal{D} = \{(\mathbf{x}^n, c^n)\}_{n=1}^{N}
\),
where \(\mathbf{x}^n\) denotes the generated image corresponding to the text
condition \(c^n\).
The generated image serves as the original clean sample \(\mathbf{x}_0\),
from which we can produce a noisy input \(\mathbf{x}_t\) for any timestep
\(t\) under condition \(c^n\).
Since diffusion models require substantial computational cost for image
generation, these synthetic images are typically generated and stored in
advance to form the training dataset.
The teacher model can subsequently be distilled into a student model using
the following objective:

\begin{equation}
\mathcal{L}_{\mathrm{out}}
=
\mathbb{E}_{(\mathbf{x}_t, c)\in \mathcal{D},\, t}
\left[
\left\|
\epsilon_{\mathcal{T}}(\mathbf{x}_t, c, t)
-
\epsilon_{\mathcal{S}}(\mathbf{x}_t, c, t)
\right\|_2^2
\right],
\tag{1}
\end{equation}

where \(\epsilon_{\mathcal{T}}\) and \(\epsilon_{\mathcal{S}}\) represent the
noise predictions of the teacher and student networks, respectively.
Here, \((\mathbf{x}_t, c)\) is sampled from the dataset \(\mathcal{D}\),
with noise injected according to timestep \(t\), which is drawn uniformly
from the interval \([0, T]\).

Furthermore, one may augment the training objective with a feature-level
distillation term defined as

\begin{equation}
\mathcal{L}_{\mathrm{feat}}
=
\mathbb{E}_{(\mathbf{x}_t, c)\in \mathcal{D},\, t}
\left[
\sum_{l}
\left\|
f_{\mathcal{T}}^{l}(\mathbf{x}_t, c, t)
-
f_{\mathcal{S}}^{l}(\mathbf{x}_t, c, t)
\right\|_2^2
\right],
\tag{2}
\end{equation}

where \(f_{\mathcal{T}}^{l}\) and \(f_{\mathcal{S}}^{l}\) denote the feature
representations extracted at layer \(l\) from the teacher and student models,
respectively. 

\textbf{Random Conditioning.}
Let $\mathcal{C}_{\text{aux}} = \{c_j\}_{j=1}^M$ denote a large corpus of text conditions. 
Generating paired images for all conditions requires repeated diffusion sampling and is therefore computationally infeasible. 
Instead, training relies on a smaller cached dataset 
$\mathcal{D} = \{(\mathbf{x}^n, c^n)\}_{n=1}^N$ with $N \ll M$. 
Kim et al.~\cite{kim2025random} showed that both sources can be leveraged via \emph{Random Conditioning (RC)}, which replaces the condition associated with a cached image during training. 
Specifically, RC samples $(\mathbf{x}^n, c^n) \sim \mathcal{D}$ and a timestep $t \sim \mathrm{Unif}([0,T])$, constructs $\mathbf{x}_t$ via the forward process, and sets the training condition
\begin{equation}
\hat c =
\begin{cases}
c^n, & \text{with probability } 1 - p(t),\\
\tilde c \sim \mathrm{Unif}(\mathcal{C}_{\text{aux}}), & \text{with probability } p(t),
\end{cases}
\tag{3}
\end{equation}
where $p(t)$ is a timestep-dependent schedule.

Uniform sampling enables broad exploration of $\mathcal{C}_{\text{aux}}$, but implicitly assumes that all regions are equally informative. 
As training progresses, discrepancies across conditions become heterogeneous: some regions quickly align while others remain difficult. 
Uniform exploration therefore wastes iterations on already-aligned regions and under-allocates effort to persistently misaligned ones.

\begin{figure*}[htb]
    \centering
     \includegraphics[width=0.90\textwidth,keepaspectratio]{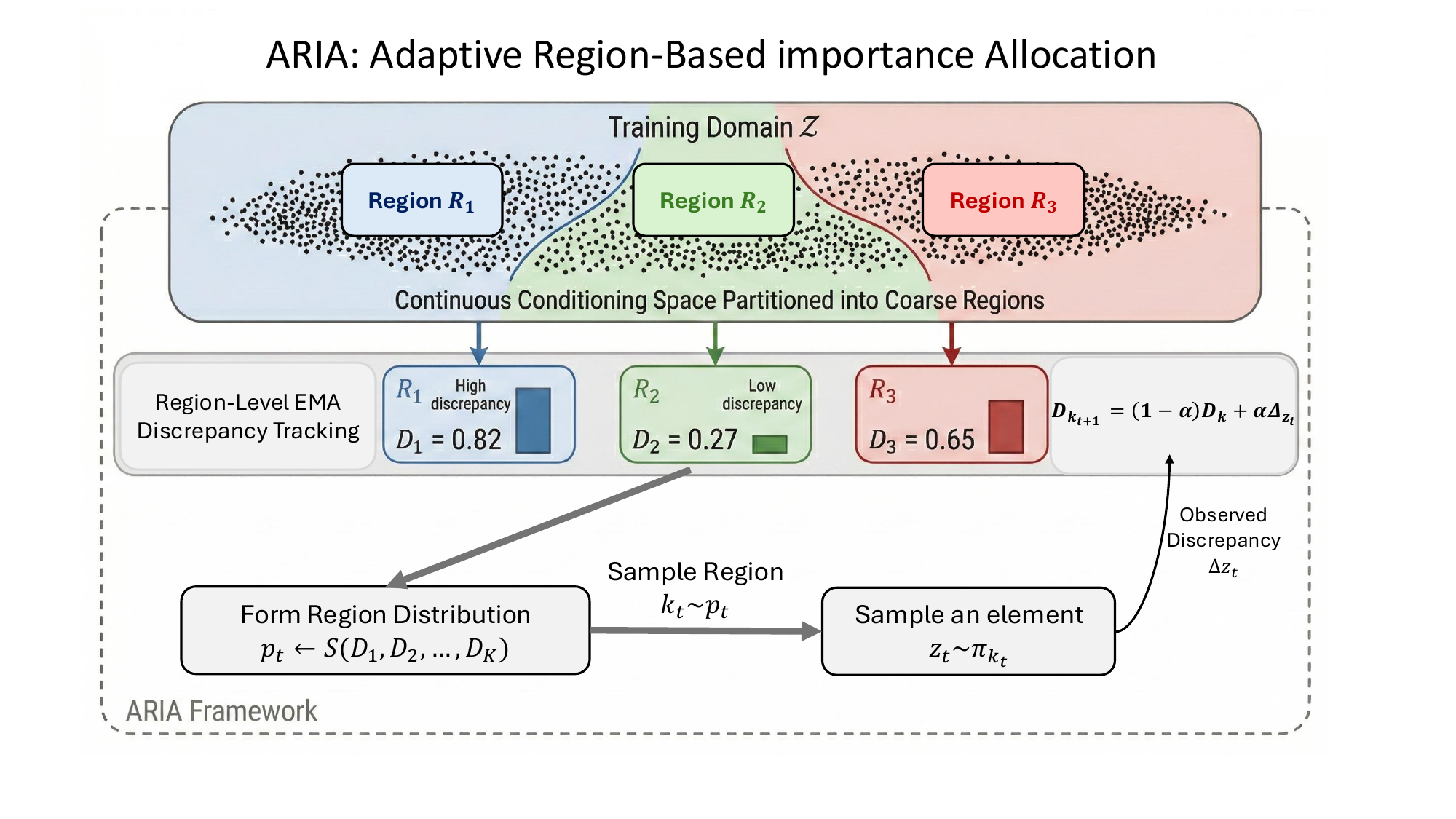} 
    \caption{\textbf{Overview of ARIA.}
The conditioning domain $\mathcal{Z}$ is partitioned into coarse regions $R_k$. 
ARIA maintains an EMA discrepancy score $D_k$ per region and converts these scores into a sampling distribution $p_t=\mathcal{S}(D_1,\dots,D_K)$. 
At each step, a region $k_t\sim p_t$ and a sample $z_t\sim\pi_{k_t}$ are drawn, the discrepancy $\Delta(z_t)$ is observed, and the corresponding score is updated via EMA.}
    \label{fig:aria_overview}
\end{figure*}

\section{ARIA: Adaptive Region-based Importance Allocation}
\label{sec:aria}

In what follows, we introduce ARIA (Algorithm~\ref{alg:aria}, Figure \ref{fig:aria_overview}), our approach for adaptive importance allocation that operates at the level of coarse regions in the input space. By tracking discrepancy statistics at the region level rather than per sample, ARIA enables adaptive reallocation of training effort toward persistently misaligned areas while remaining fully compatible with the underlying learning objective. For clarity, we first describe ARIA in a general form and then instantiate it for diffusion distillation in Sec.~\ref{sec:exp}.

ARIA assumes that the training domain is partitioned into coarse regions.
These regions may be defined in any user-specified representation (e.g., clustering in embedding space, binning in latent space, or grouping by metadata).
During training, ARIA maintains a scalar discrepancy score for each region using an exponential moving average (EMA) and biases sampling toward regions with larger tracked discrepancy.
Importantly, ARIA does not modify the underlying learning objective; it only alters how training samples are selected.

\subsection{Regions and discrepancy signal}
\label{sec:aria_regions}

Let $\mathcal{Z}$ denote the set of training samples relevant to the learning objective.
In this work, $\mathcal{Z}$ corresponds to the auxiliary prompt set $\mathcal{C}_{\text{aux}}$ introduced in Sec.~\ref{sec:problem_setting}.
ARIA assumes a partition of $\mathcal{Z}$ into $K$ coarse regions of the training domain
\[
\mathcal{R}=\{R_k\}_{k=1}^K,\quad
\cup_{k=1}^K R_k=\mathcal{Z},\quad
R_i\cap R_j=\emptyset\ (i\neq j).
\]
This partition is used as a computational device for maintaining region-level statistics, not as an assumption that the underlying factors of variation are mutually exclusive. Individual samples may involve multiple factors, but assigning each sample to a single region enables scalable discrepancy tracking without per-sample scoring. Regions can be constructed in any representation space. In our text-to-image instantiation, they are obtained by clustering prompt embeddings; details are provided in Sec.~\ref{sec:exp}.
To quantify the learning signal associated with a sampled training instance,
ARIA relies on a scalar \emph{discrepancy} function $\Delta(\cdot)$.
At iteration $t$, $\Delta_t$ denotes the scalar value used to update the score of the selected region,
e.g., the minibatch-averaged loss from samples drawn from that region.

\subsection{Region score initialization and tracking}
\label{sec:aria_ema}

For each region $R_k$, ARIA maintains a score $D_k$ intended to approximate its current mean discrepancy.

\paragraph{Initialization.}
Before adaptive sampling begins, we estimate an initial score for each region by evaluating the discrepancy on a small random subset.
Specifically, for each $k$, we sample a batch $\mathcal{B}_k \subset R_k$ and set
\begin{equation}
\label{eq:aria_init}
D_k^{(0)} \;=\; \frac{1}{|\mathcal{B}_k|}\sum_{z\in\mathcal{B}_k}\Delta(z).
\end{equation}
This provides a coarse estimate of which regions are initially under-aligned.

\paragraph{Online tracking via EMA.}
Whenever region $k$ is selected at training step $t$ and discrepancy $\Delta_t$ is observed,
its score is updated using an exponential moving average:
\begin{equation}
\label{eq:aria_ema_update}
D_k \leftarrow (1-\alpha)D_k + \alpha\,\Delta_t,
\qquad \alpha\in(0,1).
\end{equation}
All other region scores remain unchanged at that step.
EMA provides a stable tracker in the presence of stochasticity and non-stationarity.
Our theoretical analysis in subsection ~\ref{subsection:alg_analysis} quantifies its tracking behavior.

\subsection{Score-based region sampling}
\label{sec:aria_sampling}
At each training step $t$, ARIA forms a probability distribution over regions based on the current scores $\{D_k\}_{k=1}^K$.
Formally, let
\[
p_t = \mathcal{S}(D_1,\dots,D_K),
\]
where $\mathcal{S}:\mathbb{R}^K \to \Delta^{K-1}$ maps region scores to a probability distribution over $\{1,\dots,K\}$.

The only requirement is that regions with larger scores receive larger sampling probability. In practice, the mapping should balance prioritization of high-discrepancy regions with sufficient coverage of the remaining conditioning space.
Beyond this monotonicity condition, the mapping $\mathcal{S}$ can be chosen according to practical considerations. In our experiments, we instantiate $\mathcal{S}$ using a softmax transformation of the region scores.

Each region $R_k$ is associated with a sampler $\Pi_k$ that draws training instances from that region.
In this work, we instantiate $\Pi_k$ as uniform sampling over elements of $R_k$.
More generally, $\Pi_k$ may implement any user-defined sampling strategy within the region.
\begin{algorithm}[htb!]
\caption{ARIA: Adaptive Region-based Importance Allocation}
\label{alg:aria}
\begin{algorithmic}[1]
\Require Regions $\mathcal{R}=\{R_1,\dots,R_K\}$ with samplers $\{\Pi_k\}$,
discrepancy $\Delta(\cdot)$, EMA parameter $\alpha\in(0,1)$.
\Require Sampling mapping $\mathcal{S}:\mathbb{R}^K \to \Delta^{K-1}$ (maps scores to a distribution).
\Require Initialization fraction $\rho\in(0,1)$.
\State \textbf{(Initialization)} For each region $k$, sample $\mathcal{B}_k \subset R_k$ and set
$D_k^{(0)} \leftarrow \frac{1}{|\mathcal{B}_k|}\sum_{z\in\mathcal{B}_k}\Delta(z)$.
\For{$t=1,2,\dots,T$}
    \State Form region distribution $p_t \leftarrow \mathcal{S}(D_1,\dots,D_K)$.
    \State Sample region $k_t \sim p_t$.
    \State Sample instance $z_t \sim \Pi_{k}$.
    \State Compute discrepancy $\Delta_t \leftarrow \Delta(z_t)$ and perform the standard learning update.
    \State Update the selected region score:
    \[
        D_{k_t} \leftarrow (1-\alpha)D_{k_t} + \alpha\,\Delta_t.
    \]
\EndFor
\end{algorithmic}
\end{algorithm}

Given $p_t$, a region index $k_t \sim p_t$ is sampled and a training instance $z_t \sim \Pi_{k_t}$ is drawn from the corresponding region.
The discrepancy $\Delta(z_t)$ is then computed and used both for the learning update and for updating the EMA score of region $k_t$.
ARIA therefore modifies only the data-selection policy while leaving the underlying learning objective unchanged.

\subsection{Algorithm and tracking analysis}
\label{subsection:alg_analysis}

Sampling probabilities depend on region-level scores, thus these must reflect the evolving discrepancy. We show that ARIA’s EMA scores track the conditional mean discrepancy with a finite-time guarantee whose error decomposes into three terms: an initialization term, a drift-induced bias, and a stochastic term that scales with $\sqrt{\alpha}$. 
In addition, a high-probability bound ensures stability over finite horizons.

These guarantees support the use of EMA scores as lightweight proxies for region difficulty under bounded variance and bounded drift assumptions.

For brevity, we present an informal statement below (Theorem~\ref{thm:informal}) and refer to Sec.~B in the Supp. Mat. for the full theorem and proof.

\begin{theorem}[Tracking Guarantee, Informal]
\label{thm:informal}
Fix a region $s$ and let $D_s^{(n)}$ denote ARIA's EMA score after its $n$-th update.
Let $J_s^{(n)} := \mathbb{E}[\Delta_s^{(n)} \mid \mathcal{F}_{n-1}]$ denote the expected discrepancy for region $s$, where the expectation is taken over the within-region sampler (uniform over $R_s$ in this work).
Define the tracking error $e_n := D_s^{(n)} - J_s^{(n)}$.
Assume bounded conditional variance $\sigma^2$ and bounded discounted drift $V$.
Then for all $n \ge 1$,
\[
\sqrt{\mathbb{E}[e_n^2]}
\;\le\;
(1-\alpha)^n |e_0|
+
(1-\alpha)V
+
\sigma \sqrt{\alpha}.
\]

The error decomposes into an exponentially decaying initialization term,
a drift-induced bias term, and a stochastic term scaling with $\sqrt{\alpha}$.
A uniform high-probability bound further ensures stability over finite horizons.
\end{theorem}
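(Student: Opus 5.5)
We propose to analyze the EMA recursion directly on the subsequence of updates to region $s$. Write $D_s^{(n)} = (1-\alpha)D_s^{(n-1)} + \alpha \Delta_s^{(n)}$ and decompose each observation as $\Delta_s^{(n)} = J_s^{(n)} + \xi_n$, where $\xi_n := \Delta_s^{(n)} - \mathbb{E}[\Delta_s^{(n)}\mid\mathcal{F}_{n-1}]$ is a martingale difference sequence with respect to $(\mathcal{F}_n)$ and satisfies $\mathbb{E}[\xi_n^2\mid\mathcal{F}_{n-1}]\le\sigma^2$. Note that the region-selection randomness from $p_t$ only determines \emph{when} updates occur. Conditioning on the update index, via the filtration indexed by $n$, therefore removes any dependence on the sampling map $\mathcal{S}$. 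Subtracting $J_s^{(n)}$ from both sides gives the error recursion
\[
e_n = (1-\alpha)e_{n-1} - (1-\alpha)\delta_n + \alpha \xi_n,\qquad \delta_n := J_s^{(n)} - J_s^{(n-1)},
\]
with $e_0 = D_s^{(0)} - J_s^{(0)}$ determined by the initialization in \eqref{eq:aria_init}.

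Unrolling yields the three-term decomposition
\[
e_n = (1-\alpha)^n e_0 \;-\; (1-\alpha)\sum_{i=1}^n (1-\alpha)^{n-i}\delta_i \;+\; \alpha\sum_{i=1}^n (1-\alpha)^{n-i}\xi_i .
\]
We then apply Minkowski's inequality in $L^2$, so that $\sqrt{\mathbb{E}[e_n^2]}$ is at most the sum of the $L^2$ norms of the three pieces. The first piece is deterministic, or bounded by $(1-\alpha)^n|e_0|$ when $e_0$ is random. For the second piece, we take the ``bounded discounted drift'' assumption to be exactly
\[
\Bigl\|\sum_{i=1}^n(1-\alpha)^{n-i}\delta_i\Bigr\|_{L^2}\le V \quad \text{for all } n,
\]
which gives the $(1-\alpha)V$ term immediately. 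A stronger pointwise assumption $|\delta_i|\le \bar\delta$ would give $V=\bar\delta/\alpha$. For the third piece, orthogonality of martingale increments gives
\[
\mathbb{E}\Bigl[\bigl(\alpha\textstyle\sum_i (1-\alpha)^{n-i}\xi_i\bigr)^2\Bigr] = \alpha^2\sum_i(1-\alpha)^{2(n-i)}\mathbb{E}[\xi_i^2] \le \frac{\alpha^2\sigma^2}{1-(1-\alpha)^2} = \frac{\alpha\sigma^2}{2-\alpha}\le \alpha\sigma^2,
\]
since $2-\alpha\ge1$. This yields the $\sigma\sqrt{\alpha}$ term.

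The main obstacle is making the drift and martingale structure rigorous on the random subsequence of update times. $J_s^{(n)}$ depends on the student parameters at the random time of the $n$-th visit to region $s$, and the drift $\delta_n$ accumulates across many intervening updates to other regions. We will define $\mathcal{F}_{n}$ as the sigma-algebra generated by everything up to and including the $n$-th visit, check that the within-region sample $z$ is drawn fresh given $\mathcal{F}_{n-1}$ and the current parameters, and verify that $\xi_n$ is adapted and centered. If the expectation defining $J_s^{(n)}$ must also integrate over the parameter update between visits, we will instead condition on the pre-visit sigma-algebra, which is a stopping-time argument that is still valid. The drift assumption will be stated on this subsequence, so no control of the visit frequency (and hence of $\mathcal{S}$) is required.

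For the high-probability uniform statement mentioned in the remark, we would assume bounded or sub-Gaussian $\xi_n$ and apply Freedman's or Azuma's inequality to the weighted martingale $\sum_i(1-\alpha)^{n-i}\xi_i$ for each fixed $n$, then take a union bound over $n\le N$. This gives an extra $\sigma\sqrt{\alpha\log(N/\delta)}$ factor in place of $\sigma\sqrt{\alpha}$.
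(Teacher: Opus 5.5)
Your proposal is correct and follows the paper's proof essentially step for step: the same error recursion and three-term unrolling, Minkowski in $L^2$, and orthogonality of the martingale increments with the geometric-sum bound $\alpha^2\sigma^2/(1-(1-\alpha)^2)\le\alpha\sigma^2$. For the uniform statement the paper likewise uses Freedman's inequality at each fixed $n$ plus a union bound over $n\le N$; under bounded noise $|\xi_n|\le b$ this also yields an additive $\frac{2}{3}\alpha b\log(2N/\delta)$ term alongside the $\sqrt{2\alpha\sigma^2\log(2N/\delta)}$ term. The only other difference is that the paper assumes the pathwise drift bound $\sum_{k=1}^n(1-\alpha)^{n-k}|J_s^{(k)}-J_s^{(k-1)}|\le V$ and applies the triangle inequality, and this pathwise bound implies the weaker $L^2$ drift condition you adopt.
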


\section{Experiments}
\label{sec:exp}
In this section, we evaluate ARIA across multiple student architectures and data regimes.
Following~\cite{kim2025random}, we use the exponential schedule
$p(t) = e^{-\lambda \left(1 - \frac{t}{T}\right)}$.
As the discrepancy signal $\Delta(\cdot)$, we use the output-level distillation loss
$\mathcal{L}_{\mathrm{out}}$.

For each region $R_k$ in the conditioning space, ARIA tracks the expected output-level loss
\[
J_k := \mathbb{E}_{c \in R_k,\, x_t}\left[\mathcal{L}_{\mathrm{out}}(x_t,c)\right],
\]
where $\mathcal{L}_{\mathrm{out}}(x_t,c)$ denotes the denoising loss evaluated for condition $c$
and noisy latent $x_t$ sampled from the forward diffusion process.
In practice, this quantity is estimated online using an exponential moving average over sampled instances from $R_k$. 
Specifically, the score of region $R_k$ is updated whenever a condition $c \in R_k$ is selected as a replacement during training, using the observed distillation loss.
We use a softmax mapping for $\mathcal{S}$ and set the EMA parameter to $\alpha=0.1$.


To construct the regions, we extract prompt embeddings from the auxiliary corpus using a pretrained CLIP~\cite{radford2021learning} text encoder and apply k-means clustering in the embedding space.
The number of clusters is selected using the Silhouette score to balance inter and intra-cluster cohesion.
Further details on the clustering procedure are provided in Sec.~\ref{sec:clustering} of the Supp. Mat., while sensitivity to these design choices is analyzed in Sec.~\ref{sec:ablation} and Sec.~\ref{sec:supp_additional_eval} of the Supp. Mat.

\paragraph{Roadmap.}
Our evaluation is structured around two complementary scenarios.
First, in subsection~\ref{exp:setup} we describe the experimental setup.
Next, in subsection~\ref{sec:exp_balanced} we consider a balanced setting that reproduces the standard knowledge distillation setup as in~\cite{kim2024bk,kim2025random} to assess convergence and final performance.
Finally, in subsection~\ref{sec:exp_imbalanced} we construct an imbalanced concept-removal setting to test robustness under distributional gaps. Additional ablation studies of our method are provided in Section~\ref{sec:ablation}.
\subsection{Experimental Setup}\label{exp:setup}
We evaluate ARIA across three teacher models: Stable Diffusion v1.4~\cite{rombach2022high}, Stable Diffusion v2.1, and SDXL~\cite{podellsdxl}. We distill these teachers into multiple student architectures across several settings. For Stable Diffusion v1.4, we consider (i) BK-SDM students~\cite{kim2024bk}, trained for 75K steps, and (ii) channel-pruned U-Net variants~\cite{kim2025random}, trained for 300K steps\footnote{RC~\cite{kim2025random} reports results up to $400$K iterations. In our experiments, channel-pruned models saturate around $300$K, with negligible gains from further training.}. For Stable Diffusion v2.1, we distill into two block-pruned students, BK-SDM-v2-S and BK-SDM-v2-T, trained for 25K steps. For SDXL, we use KOALA-700M~\cite{lee2024koala} as the student and train for 100K steps.
Table~\ref{tab:kd_balanced_main} reports the sizes of the variants in each family.

For each model, we report the mean and standard deviation computed over three random seeds.
\paragraph{Training Dataset.}
For the distillation of Stable Diffusion v2.1, Stable Diffusion v1.4, and SDXL, we follow~\cite{kim2025random} and construct the paired training set from LAION-Aesthetics V2~\cite{schuhmann2022laion_aesthetics}, a subset of the LAION corpus~\cite{schuhmann2021laion}.
Specifically, we sample $212$K image–text pairs from the LAION-Aesthetics V2 6.5+ subset and use them as the training set.
To simulate the image-free setting, we discard the original images and retain only the associated text prompts; synthetic images are then generated from these prompts using the corresponding teacher model.
Separately, we sample $20$M text prompts from the full LAION corpus to construct the auxiliary text pool used by our method.

For the filtered setting described below, we remove training samples belonging to selected semantic categories, namely animals, vehicles, and food, using BLIP~\cite{li2022blip} together with keyword-based filtering.
After filtering, the paired training set contains approximately $168$K image–text pairs.


\paragraph{Evaluation.}
Following~\cite{kim2024bk,kim2025random}, we use $30$K image–text pairs sampled from the MS-COCO~\cite{lin2014microsoft} validation set.
Each image is associated with five human-annotated captions; following~\cite{kim2024bk}, we use a single caption from~\cite{kim2024bk} for evaluation. We report the following evaluation metrics:
Fr\'echet Inception Distance (FID, $\downarrow$),
Inception Score (IS, $\uparrow$), and
CLIP score ($\uparrow$). We use the Inception-v3 model for computing FID and IS, and the Vit-g/14 model for calculating CLIP score. We verify that none of the MS-COCO validation captions used for evaluation appear in the LAION auxiliary prompt pool.

\paragraph{Computational overhead.}
The only additional preprocessing cost in ARIA stems from embedding and clustering the auxiliary text corpus, a process that handles 20M text prompts using only $5.3$ A$100$ GPU hours, a small fraction of the $400$–$600$ GPU hours required for student training.
The computational overhead across student models is summarized in Fig.~\ref{fig:efficiency}, and during training ARIA adds negligible cost, as region scores are updated from distillation losses via lightweight EMA operations.
\begin{figure*}[t]
    \centering
    \includegraphics[width=\textwidth]{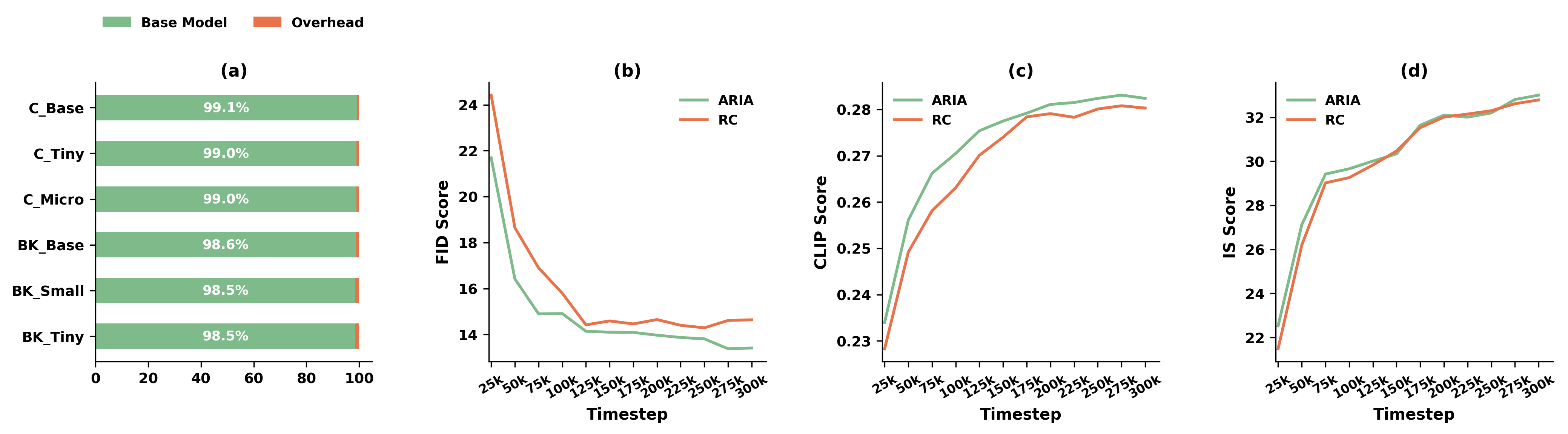}
    \caption{
    \textbf{Training efficiency and convergence comparison.}
    (a) Relative computational overhead of ARIA compared to the base model,
    showing negligible additional cost.
    (b) FID across training checkpoints.
    (c) CLIP score across training checkpoints.
    (d) IS across training checkpoints.
    ARIA converges faster than RC,
    achieving lower FID and higher CLIP at earlier stages of training.
    }
    \label{fig:efficiency}
\end{figure*}
\subsection{Balanced Distillation Setting}
\label{sec:exp_balanced}

\begin{table}[htb!]
\centering
\small
\setlength{\tabcolsep}{4pt}
\renewcommand{\arraystretch}{1.18}

\resizebox{\linewidth}{!}{%
\begin{tabular}{@{}lll|rrr|rrr@{}}
\toprule
& & & \multicolumn{3}{c|}{\textbf{RC}} & \multicolumn{3}{c}{\textbf{Ours (ARIA)}} \\
\cmidrule(lr){4-6}\cmidrule(lr){7-9}
\textbf{Family} & \textbf{Model} &\textbf{\#Params} 
& \textbf{FID}$\downarrow$ & \textbf{IS}$\uparrow$ & \textbf{CLIP}$\uparrow$
& \textbf{FID}$\downarrow$ & \textbf{IS}$\uparrow$ & \textbf{CLIP}$\uparrow$ \\
\midrule

\rowcolor{gray!10}
\multicolumn{2}{l}{\textbf{Teacher (SD 1.4)}} & 1.04B &
\multicolumn{6}{c}{\textbf{13.05} \quad / \quad \textbf{36.76} \quad / \quad \textbf{29.58}} \\
\midrule

\multirow{3}{*}{\textbf{C-based}}
& C-base  & 0.73B
& 14.45 & \textbf{34.57} & 28.73 
& \textbf{13.71} & 34.12 & \textbf{28.92} \\

& C-tiny  & 0.49B
& 14.64 & 32.79 & 28.09 
& \textbf{13.37} & \textbf{32.81} & \textbf{28.30} \\

& C-micro & 0.40B
& 14.39 & 30.20 & 27.56 
& \textbf{13.77} & \textbf{30.35} & \textbf{27.75} \\

\midrule

\multirow{3}{*}{\textbf{B-based}}
& BK-base  & 0.76B
& 15.16 & 35.90 & 29.15 
& \textbf{13.82} & \textbf{36.52} & \textbf{29.28} \\

& BK-small & 0.66B
& 16.19 & 35.34 & 27.74 
& \textbf{15.65} & \textbf{36.24} & \textbf{28.03} \\

& BK-tiny  & 0.50B
& 15.72 & 34.51 & 27.78 
& \textbf{15.16} & \textbf{35.12} & \textbf{27.91} \\

\bottomrule
\end{tabular}
}

\caption{\textbf{Balanced distillation setting.} Students are trained using $212$K cached LAION pairs and an auxiliary text pool $\mathcal{C}_{\text{aux}}$ of $20$M prompts. ARIA improves over Random Conditioning across most metrics and student architectures.}
\label{tab:kd_balanced_main}
\end{table}

We start by evaluating ARIA on the full $212$K cached LAION paired dataset and the $20$M-prompt $\mathcal{C}_{\text{aux}}$.
This regime evaluates performance under a broadly representative and semantically diverse prompt distribution.

In this setting, we study:
(i) final generative quality,
(ii) convergence speed,
and (iii) robustness across student architectures.
Since all training components remain unchanged except prompt selection,
any observed improvements can be attributed to effective allocation of training effort across the conditioning space.
\begin{table*}[htb!]
\centering
\small
\setlength{\tabcolsep}{4pt}
\renewcommand{\arraystretch}{1.15}
\begin{adjustbox}{max width=\textwidth}
\begin{tabular}{l ccc  ccc | ccc ccc}
\toprule
& \multicolumn{6}{c}{\textbf{Seen }} 
& \multicolumn{6}{c}{\textbf{Unseen}} \\

\cmidrule(lr){2-7} \cmidrule(lr){8-13}

\textbf{Model}
& \multicolumn{3}{c}{RC}
& \multicolumn{3}{c}{Ours}
& \multicolumn{3}{c}{RC}
& \multicolumn{3}{c}{Ours} \\

& FID$\downarrow$ & IS$\uparrow$ & CLIP$\uparrow$
& FID$\downarrow$ & IS$\uparrow$ & CLIP$\uparrow$
& FID$\downarrow$ & IS$\uparrow$ & CLIP$\uparrow$
& FID$\downarrow$ & IS$\uparrow$ & CLIP$\uparrow$ \\

\midrule

\rowcolor{gray!10}
\textbf{SD 1.4}
& \multicolumn{6}{c}{\textbf{14.51\std{.17} / 31.52\std{.27} / 29.38\std{.04}}}
& \multicolumn{6}{c}{\textbf{17.64\std{.37} / 25.20\std{.41} / 29.84\std{.12}}} \\

\midrule
BK-Base
& \textbf{15.41}\std{.22} & 30.85\std{.29} & 29.09\std{.03}
& 15.87\std{.17} & \textbf{31.21}\std{.32} & \textbf{29.24}\std{.06}
& 19.66\std{.18} & 24.29\std{.22} & 29.27\std{.02}
& \textbf{18.47}\std{.21} & \textbf{25.13}\std{.17} & \textbf{29.56}\std{.05} \\

BK-Small
& 15.84\std{.31} & 30.14\std{.16} & 27.80\std{.03}
& \textbf{15.41}\std{.23} & \textbf{30.95}\std{.21} & \textbf{27.82}\std{.03}
& 21.41\std{.27} & 23.36\std{.18} & 27.85\std{.06}
& \textbf{19.91}\std{.22} & \textbf{24.19}\std{.26} & \textbf{27.98}\std{.09} \\

BK-Tiny
& 18.13\std{.14} & 28.33\std{.21} & 27.41\std{.02}
& \textbf{17.19}\std{.09} & \textbf{29.94}\std{.11} & \textbf{27.67}\std{.02}
& \textbf{20.37}\std{.48} & 23.15\std{.27} & 27.40\std{.10}
& 20.39\std{.34} & \textbf{23.79}\std{.00} & \textbf{27.92}\std{.03} \\
\midrule
C-Base
& 15.91\std{.44} & 28.59\std{.10} & 28.65\std{.08}
& 15.91\std{.51} & \textbf{29.29}\std{.12} & \textbf{28.84}\std{.05}
& 19.72\std{.33} & 23.23\std{.13} & 28.41\std{.07}
& \textbf{18.51}\std{.21} & \textbf{24.38}\std{.14} & \textbf{28.84}\std{.01} \\

C-Tiny
& 16.27\std{.12} & 27.49\std{.24} & 28.25\std{.03}
& \textbf{15.97}\std{.09} & \textbf{28.13}\std{.11} & \textbf{28.25}\std{.01}
& 20.01\std{.18} & 22.81\std{.31} & 28.01\std{.10}
& \textbf{18.35}\std{.27} & \textbf{23.61}\std{.19} & \textbf{28.37}\std{.02} \\

C-Micro
& 16.01\std{.09} & 26.71\std{.22} & 27.53\std{.02}
& \textbf{15.79}\std{.21} & \textbf{27.60}\std{.18} & \textbf{27.75}\std{.03}
& 20.39\std{.14} & 22.03\std{.28} & 27.29\std{.04}
& \textbf{18.78}\std{.28} & \textbf{22.79}\std{.39} & \textbf{27.69}\std{.09} \\

\bottomrule
\end{tabular}
\end{adjustbox}
\caption{Filtered concept-removal setting on SD~1.4. We report performance on seen and unseen concepts. All students are trained without images from the removed animal, vehicle, and food categories. ARIA generally improves over RC across multiple metrics, with the largest and most consistent gains naturally appearing in the unseen regime.}
\label{tab:filtered}
\end{table*}
Table~\ref{tab:kd_balanced_main} reports results in the balanced regime.
The teacher model (Stable Diffusion v$1.4$) is fixed and therefore shown only once for reference.
Across student architectures, ARIA improves over RC in most cases, with the clearest gains appearing in FID and CLIP, while maintaining competitive or improved IS.
Notably, the relative gains are more pronounced for smaller student models,
indicating that ARIA is particularly effective in capacity-constrained settings.

To further evaluate ARIA beyond the SD~1.4 setting, we distill SDXL~\cite{podellsdxl} into KOALA-700M~\cite{lee2024koala}. We compare ARIA against RC~\cite{kim2025random} and a static region-sampling policy that uses the same regions as ARIA but samples them uniformly. This baseline isolates the effect of online adaptive allocation from the effect of region construction alone. As shown in Table~\ref{tab:policy}, ARIA outperforms both RC and static region sampling, indicating that the gains come from discrepancy-aware allocation rather than clustering alone.

\begin{table}[H]
\centering
\small
\setlength{\tabcolsep}{4pt}
\renewcommand{\arraystretch}{1.12}
\begin{adjustbox}{max width=\columnwidth}
\begin{tabular}{lccc}
\toprule
\textbf{Policy} & FID$\downarrow$ & IS$\uparrow$ & CLIP$\uparrow$ \\
\midrule

\rowcolor{gray!10}
\textbf{SDXL}
& \textbf{13.12}\std{.39}
& \textbf{35.84}\std{.33}
& \textbf{32.57}\std{.09} \\

\midrule

Random Conditioning
& 22.45\std{.41}
& 28.72\std{.22}
& 29.39\std{.06} \\

Static Region Sampling
& 21.97\std{.37}
& 29.31\std{.28}
& 29.60\std{.05} \\

ARIA
& \textbf{20.83}\std{.46}
& \textbf{29.70}\std{.26}
& \textbf{29.83}\std{.09} \\

\bottomrule
\end{tabular}
\end{adjustbox}
\caption{Distilling KOALA-700M in the unfiltered setting. \textbf{RC}: Random Conditioning. \textbf{Static}: uniform sampling over clusters followed by uniform sampling within the selected cluster.}
\label{tab:policy}
\end{table}

One phenomenon we observe is that ARIA significantly accelerates convergence for models trained from random initialization (the C-based architectures).
For example, as shown in Fig.~\ref{fig:efficiency}, the C-Tiny student trained with ARIA reaches comparable FID and CLIP scores after approximately $175$K iterations, which corresponds to $60\%$ of the full training budget, compared to a model trained with RC for $300$K iterations.
Similar trends are observed across other C-based architectures; additional results are provided in Sec.~\ref{sec:supp_additional_eval} of the Supp. Mat.

Subfigures~(b)--(d) plot FID, CLIP, and IS across training checkpoints for the C-Tiny architecture.
ARIA achieves lower FID and higher CLIP earlier, demonstrating faster convergence than RC.
Importantly, these improvements are obtained with negligible computational overhead, as illustrated in Fig.~\ref{fig:efficiency}(a).

\subsection{Filtered Concept-Removal Setting}
\label{sec:exp_imbalanced}

Tables~\ref{tab:filtered}, \ref{tab:filtered_unseen_extra} report results in a more challenging regime
where samples related to \emph{animals, vehicles, and food} are removed
from the cached image--text pairs.
Students are therefore trained using $168$K cached pairs while retaining
the same auxiliary prompt pool $\mathcal{C}_{\text{aux}}$ of size $20$M.
For evaluation, we split the $30$K image--text pairs sampled from the MS-COCO validation set into two subsets:
(i) $20{,}117$ samples that do not contain \emph{animals, vehicles, or food} concepts (seen),
and (ii) $9{,}883$ samples that contain only these removed concepts (unseen). In this setting, the gap between ARIA and RC widens, particularly on the unseen subset corresponding to the removed semantic categories.
Intuitively, when the cached image--text pairs lack entire semantic categories, parts of the conditioning space become poorly covered by training data.
Prompts from such regions often produce larger disagreement between the student and the teacher, since the student has not observed sufficient examples of these concepts.
Since ARIA tracks the student--teacher discrepancy at the region level, these regions naturally accumulate higher discrepancy scores and are therefore sampled more frequently.

This behavior allows ARIA to focus training effort on parts of the conditioning space that remain under-distilled.
In contrast, uniform sampling allocates the same budget to both easy and difficult regions, which can lead to wasted updates on already well-aligned concepts.
As a result, ARIA yields noticeably larger improvements in this filtered setting than in the balanced regime, highlighting the benefit of adaptive allocation when the training distribution is imbalanced or systematically missing semantic categories.

\begin{table}[htb!]
\centering
\scriptsize
\setlength{\tabcolsep}{2.0pt}
\renewcommand{\arraystretch}{1.05}
\begin{adjustbox}{max width=\columnwidth}
\begin{tabular}{l ccc | ccc}
\toprule
\textbf{Model}
& \multicolumn{3}{c|}{RC}
& \multicolumn{3}{c}{Ours} \\

\cmidrule(lr){2-4} \cmidrule(lr){5-7}

& FID$\downarrow$ & IS$\uparrow$ & CLIP$\uparrow$
& FID$\downarrow$ & IS$\uparrow$ & CLIP$\uparrow$ \\

\midrule

\rowcolor{gray!10}
\textbf{SD 2.1}
& \multicolumn{6}{c}{\textbf{19.75\std{.11} / 24.85\std{.12} / 30.88\std{.05}}} \\

BK-v2-S
& 20.40\std{.21} & 23.63\std{.24} & 28.87\std{.02}
& \textbf{19.14}\std{.36} & \textbf{24.01}\std{.17} & \textbf{29.25}\std{.03} \\

BK-v2-T
& 20.84\std{.31} & 23.49\std{.19} & 28.82\std{.06}
& \textbf{19.67}\std{.39} & \textbf{23.52}\std{.24} & \textbf{29.07}\std{.02} \\

\midrule

\rowcolor{gray!10}
\textbf{SDXL}
& \multicolumn{6}{c}{\textbf{18.66\std{.17} / 26.21\std{.14} / 33.06\std{.02}}} \\

KOALA
& 26.42\std{.27} & 21.47\std{.37} & 29.83\std{.06}
& \textbf{24.89}\std{.31} & \textbf{22.54}\std{.31} & \textbf{30.18}\std{.08} \\

\bottomrule
\end{tabular}
\end{adjustbox}
\caption{Filtered concept-removal setting on SD~2.1 and SDXL; unseen-concept results only. ARIA improves over RC across backbones and student architectures.}
\label{tab:filtered_unseen_extra}
\end{table}

For the SD~2.1 and SDXL experiments in Table~\ref{tab:filtered_unseen_extra}, we report only unseen-concept results due to space constraints. In the seen regime, ARIA remains comparable to RC and achieves slight gains across most metrics.

\section{Conclusion and Future Work}
\label{sec:conclusion}

\textbf{Conclusion.}
In this work, we introduced \emph{ARIA}, a framework for adaptive importance allocation in large-scale training regimes. ARIA operates at the region level, dynamically reallocating training effort toward coarse regions that exhibit higher model discrepancy, without modifying the underlying optimization objective. We instantiate ARIA for text-to-image diffusion distillation, where large conditioning spaces make uniform sampling inefficient. We provide a theoretical analysis of the proposed scoring mechanism, establishing finite-time tracking guarantees for region-level discrepancy estimates.  Empirically, ARIA improves over uniform random sampling in most evaluated settings, with the strongest gains appearing under imbalanced or underrepresented conditioning regimes, and achieves faster convergence through efficient allocation.

\textbf{Future Work.}
Two directions emerge from this work. First, adaptive importance allocation is increasingly relevant in large-scale learning systems, where data pools are massive and unevenly informative. Extending ARIA to domains such as large language models and multimodal generative systems is a natural direction. Second, our region-based formulation opens the door to alternative allocation strategies. Beyond discrepancy-driven allocation, future work may explore uncertainty-aware or dynamically learned region constructions, potentially leading to more principled and robust training mechanisms.
\section{Limitations}
In our instantiation, ARIA constructs regions using fixed clustering over CLIP text embeddings. Although effective in our experiments, this choice may not capture fine-grained prompt differences, compositional structure, or task-specific notions of similarity. ARIA is not tied to CLIP or to fixed clusters, and future work could explore alternative encoders, task-specific or model-derived representations, soft or hierarchical regions, and dynamically updated partitions. Second, ARIA improves the allocation of a limited training budget, but it does not remove the cost of constructing the cached image set used for distillation. Following prior work, we assume a fixed cache of generated images and focus on selecting auxiliary text conditions more effectively. Reducing or eliminating this dependence on cached images remains an important direction. Finally, ARIA introduces practical design choices, including the embedding model, number of regions, EMA parameter, and score-to-probability mapping. Our ablations suggest that ARIA is stable, but different datasets or teacher–student pairs may benefit from different settings. More principled or learned allocation rules could ease deployment.

\newcommand{\acksection}{\section*{Acknowledgments and Disclosure of Funding}}
\NewEnviron{ack}{%
  \acksection
  \BODY
}
\begin{ack}

This research was funded by the Ministry of Science, Research and the Arts Baden-Wuerttemberg in the Artificial Intelligence Software Academy (AISA). We also acknowledge the support of the Stuttgart Center for Simulation Science (SimTech) and thank the International Max Planck Research School for Intelligent Systems (IMPRS-IS) for support. L.\ Mualem was supported by a postdoctoral scholarship from the Planning and Budgeting Committee (PBC) of the Council for Higher Education in Israel. The authors gratefully acknowledge the computing time provided on the high-performance computer HoreKa by the National High-Performance Computing Center at KIT (NHR@KIT). This center is jointly supported by the Federal Ministry of Education and Research and the Ministry of Science, Research and the Arts of Baden-Württemberg, as part of the National High-Performance Computing (NHR) joint funding program (https://www.nhr-verein.de/en/our-partners). HoreKa is partly funded by the German Research Foundation (DFG). 

\end{ack}

{
    \small
    \bibliographystyle{ieeenat_fullname}
    \bibliography{tex/main_eccv}
}
\newpage
\onecolumn
\appendix
\section{Additional Related Work}\label{app:related}

\paragraph{Relation to bandit-style allocation.}
ARIA can be interpreted through a multi-armed bandit lens~\cite{slivkins2019introduction,tang2025adapting,bouneffouf2020survey}, where each region corresponds to an arm and the observed discrepancy acts as a reward signal used to guide future sampling. However, unlike classical bandit settings, the reward distribution is non-stationary and depends on the evolving model parameters. Our analysis therefore focuses on the stability and tracking properties of the region-level estimator rather than cumulative regret. This perspective connects ARIA to adaptive data selection methods while highlighting the distinct challenges of training-time non-stationarity.

\paragraph{Coresets.}
Coresets~\cite{jubran2019introduction,feldman2009private,huang2019coresets,tukan2025improving} have been widely studied for large-scale machine learning and
clustering problems, in particular for objectives such as $k$-means and
$k$-median~\cite{har2004coresets,cohen2022improved,har2005smaller}. In machine learning, we are (usually) given an input set
$P \subseteq \mathbb{R}^d$ of $n$ points, its corresponding weight
function $w : P \rightarrow \mathbb{R}$, a feasible set of queries $X$,
and a loss function $\phi : P \times X \rightarrow [0,\infty)$.
The tuple $(P,w,X,\phi)$ is called a \emph{query space}, and it defines
the optimization problem at hand. For a given problem that is defined
by its query space $(P,w,X,\phi)$, and an error parameter
$\varepsilon \in (0,1)$, an $\varepsilon$-coreset is a small weighted
subset of the input points that approximates the loss of the input set
$P$ for every feasible query $x$, up to a provable bound of $1+\varepsilon$.
\section{Theorem Proof}\label{app:thm}
In this section, we present the full tracking guarantee theorem and provide its complete proof.
\paragraph{Why a theoretical guarantee is needed.}
ARIA maintains an exponential moving average (EMA) score for each semantic region to estimate its current training difficulty. These scores determine the sampling distribution used during training, and therefore inaccurate estimates could cause the algorithm to focus on the wrong regions. However, the discrepancy signal observed during training is stochastic, and the true region difficulty evolves as the student model improves. It is therefore important to understand whether the maintained EMA scores reliably track the underlying region difficulty.

\paragraph{Interpretation of the guarantee.}
Our analysis shows that the EMA score provably tracks the conditional mean discrepancy of each region. The resulting bound decomposes the tracking error into three components: an initialization term that decays geometrically, a drift term capturing how the true region difficulty evolves during training, and a stochastic term reflecting sampling noise.

In the bound, $\sigma$ measures the variance of the stochastic discrepancy observations arising from minibatch sampling and diffusion noise. The quantity $V$ measures the cumulative discounted drift of the true region difficulty, defined through the changes in $J_s^{(n)}$ over time. In realistic training dynamics, the discrepancy of a region evolves as the student model updates its parameters and progressively improves. However, stochastic optimization typically produces gradual parameter updates, and therefore the difficulty of a region tends to evolve smoothly across training iterations. The discounted drift assumption therefore allows the analysis to capture this non-stationary behavior while remaining mild in practice.

The theorem provides two complementary guarantees. The RMS bound controls the expected tracking error, while the high-probability bound shows that the tracking error remains uniformly bounded over an entire training horizon. Together, these results justify the use of EMA scores in ARIA as reliable estimates of region difficulty for guiding the sampling distribution during training.
\subsection{Preliminaries for the analysis}
\label{sec:analysis_prelim}

We analyze the behavior of the exponential moving average (EMA) score for a fixed region (cluster) $s$.

Since the score of $s$ is updated only when $s$ is selected, we consider the subsequence of training steps in which $s$ is chosen.
Let $n=1,2,\dots$ index the $n$-th update of region $s$ (not global training iterations).

Let $\Delta_s^{(n)}$ denote the scalar discrepancy observed at the $n$-th update of $s$.
Let $\{\mathcal{F}_{n-1}\}_{n\ge1}$ be the filtration representing all information available just before that update,
including previously sampled data, past discrepancies, and the current model parameters.

Define the conditional mean discrepancy
\[
J_s^{(n)} := \mathbb{E}\!\left[\Delta_s^{(n)} \mid \mathcal{F}_{n-1}\right],
\]
and the noise term
\[
\xi_n := \Delta_s^{(n)} - J_s^{(n)}.
\]

and we define the tracking error
\[
e_n := D_s^{(n)} - J_s^{(n)}.
\]

\setcounter{theorem}{1}
\begin{restatable}{theorem}{AriaTheorem}
\label{thm:ema_tracking_combined}
Fix a cluster $s$ and consider the subsequence of training iterations in which $s$ is selected,
indexed by $n=1,2,\dots$.
Let $\{\mathcal{F}_{n-1}\}_{n\ge 1}$, $\Delta_s^{(n)}$, $J_s^{(n)}$, $\xi_n$, $D_s^{(n)}$, and
$e_n := D_s^{(n)}-J_s^{(n)}$ be as defined in the preliminaries, and let $\alpha\in(0,1)$.

Assume that for all $n\ge 1$:
\begin{enumerate}[label=(A\arabic*),leftmargin=*]
    \item \textbf{Unbiased noise:} $\mathbb{E}[\xi_n \mid \mathcal{F}_{n-1}] = 0$.
    \item \textbf{Bounded conditional variance:} $\mathbb{E}[\xi_n^2 \mid \mathcal{F}_{n-1}] \le \sigma^2$ for some $\sigma>0$.
    \item \textbf{Bounded discounted drift:} for some $V\ge 0$ and every $n\ge 1$,
    \begin{equation}
    \label{eq:disc_drift_thm_combined}
    V_s^{(\alpha)}(n)
    :=\sum_{k=1}^{n}(1-\alpha)^{\,n-k}\,\big|J_s^{(k)}-J_s^{(k-1)}\big|
    \le V,
    \end{equation}
    where $J_s^{(0)}$ is any fixed reference value.
\end{enumerate}
Then the following statements hold.
\begin{enumerate}[label=(\roman*),leftmargin=*]
\item \textbf{(RMS tracking bound).} For every $n\ge 1$,
\begin{equation}
\label{eq:ema_tracking_bound_combined}
\sqrt{\mathbb{E}[e_n^2]}
\;\le\;
(1-\alpha)^n |e_0|
\;+\;
(1-\alpha)\,V
\;+\;
\sigma \sqrt{\alpha}.
\end{equation}

\item \textbf{(High-probability tracking bound, uniform up to a horizon).}
Assume in addition that
\begin{enumerate}[label=(A\arabic*),leftmargin=*]
\setcounter{enumi}{3}
\item \textbf{Bounded noise:} $|\xi_n|\le b$ almost surely for some $b>0$.
\end{enumerate}
Fix any horizon $N\ge 1$ and any $\delta\in(0,1)$. Then with probability at least $1-\delta$,
simultaneously for all $n\in\{1,\dots,N\}$,
\begin{equation}
\label{eq:ema_hp_bound_uniform}
|e_n|
\;\le\;
(1-\alpha)^n|e_0|
\;+\;
(1-\alpha)V
\;+\;
\sqrt{2\,\alpha\,\sigma^2\log\!\frac{2N}{\delta}}
\;+\;
\frac{2}{3}\,\alpha\,b\,\log\!\frac{2N}{\delta}.
\end{equation}
\end{enumerate}
\end{restatable}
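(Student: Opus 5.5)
The plan is to unroll the EMA recursion for the tracking error $e_n$ into three explicit pieces, one per term of the bound, and bound each separately: deterministically for the initialization and drift pieces, and via martingale arguments for the noise piece.

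\textbf{Step 1: error recursion.} Write $D_s^{(n)}=(1-\alpha)D_s^{(n-1)}+\alpha\Delta_s^{(n)}$ and substitute $\Delta_s^{(n)}=J_s^{(n)}+\xi_n$. Subtracting $J_s^{(n)}$ gives
\[
e_n=(1-\alpha)\bigl(e_{n-1}-(J_s^{(n)}-J_s^{(n-1)})\bigr)+\alpha\xi_n .
\]
Unrolling from $e_0=D_s^{(0)}-J_s^{(0)}$ yields
\[
e_n=(1-\alpha)^n e_0-\sum_{k=1}^{n}(1-\alpha)^{n-k+1}\bigl(J_s^{(k)}-J_s^{(k-1)}\bigr)+M_n,
\qquad
M_n:=\alpha\sum_{k=1}^{n}(1-\alpha)^{n-k}\xi_k .
\]
The middle (drift) term is bounded in absolute value, almost surely, by $(1-\alpha)V_s^{(\alpha)}(n)\le(1-\alpha)V$ using (A3).

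\textbf{Step 2: RMS bound.} Apply Minkowski's inequality in $L^2$ to the three pieces. The first piece is deterministic, and the drift piece is bounded almost surely by $(1-\alpha)V$, so its $L^2$ norm is at most that. For $M_n$, the $\xi_k$ are martingale differences by (A1): for $j<k$, $\xi_j$ is $\mathcal{F}_{k-1}$-measurable, so the tower property gives $\mathbb{E}[\xi_j\xi_k]=0$. Hence
\[
\mathbb{E}[M_n^2]=\alpha^2\sum_{k}(1-\alpha)^{2(n-k)}\mathbb{E}[\xi_k^2]\le \frac{\alpha^2\sigma^2}{1-(1-\alpha)^2}=\frac{\alpha\sigma^2}{2-\alpha}\le \alpha\sigma^2,
\]
using (A2). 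Taking square roots gives the $\sigma\sqrt{\alpha}$ term and proves (i).

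\textbf{Step 3: high-probability bound.} Fix $n\le N$. The sequence $M_n$ is a sum of martingale differences $\alpha(1-\alpha)^{n-k}\xi_k$, whose weights are deterministic once $n$ is fixed. By (A4) the increments are bounded by $c=\alpha b$, and the predictable quadratic variation is at most $v=\alpha\sigma^2$ by the geometric-sum computation of Step 2. Freedman's (martingale Bernstein) inequality gives
\[
\Pr(|M_n|\ge\varepsilon)\le 2\exp\!\bigl(-\varepsilon^2/(2(v+c\varepsilon/3))\bigr).
\]
Solving for $\varepsilon$ at confidence level $\delta/N$ yields
\[
|M_n|\le\sqrt{2v\log(2N/\delta)}+\tfrac{2}{3}c\log(2N/\delta).
\]
A union bound over $n=1,\dots,N$ then makes this hold simultaneously for all $n\le N$. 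Because the weights depend on $n$, the $M_n$ do not form a single martingale, which is why I use a union bound rather than a maximal inequality. Combining with the deterministic bounds of Step 1 via the triangle inequality gives (ii).

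\textbf{Main obstacle.} The delicate point is Step 3: verifying that the version of Freedman's inequality used has a deterministic variance bound $v$ and increment bound $c$. Both hold here because the weights are fixed for each $n$. It also requires the inversion to produce exactly the constants $\sqrt{2v\log}$ and $\tfrac23 c\log$.

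A secondary subtlety is measurability in Step 2. One must check that $J_s^{(n)}$ is $\mathcal{F}_{n-1}$-measurable and that $\xi_j$ is $\mathcal{F}_{k-1}$-measurable for $j<k$, so that the cross terms in $\mathbb{E}[M_n^2]$ vanish. This holds when the filtration includes the past observations, as stated in the preliminaries.
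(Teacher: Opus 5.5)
Your proposal is correct and follows the paper's proof essentially step for step. It uses the same unrolled three-term decomposition, Minkowski's inequality with martingale-difference orthogonality and the geometric-sum bound for (i), and two-sided Freedman with $v=\alpha\sigma^2$, $c=\alpha b$ plus a union bound over $n\le N$ for (ii). Your handling of the drift term through its almost-sure bound $(1-\alpha)V$ is slightly cleaner than the paper's, which treats that term as deterministic.
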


\begin{proof}
\textbf{(i) RMS bound.}
Let $e_n := D_s^{(n)} - J_s^{(n)}$ denote the tracking error. Starting from the EMA update for cluster $s$ and using the decomposition $\Delta_s^{(n)}=J_s^{(n)}+\xi_n$, we can write
\[
D_s^{(n)}=(1-\alpha)D_s^{(n-1)}+\alpha\bigl(J_s^{(n)}+\xi_n\bigr).
\]
Subtracting $J_s^{(n)}$ and adding/subtracting $(1-\alpha)J_s^{(n-1)}$ yields
\begin{equation}
\label{eq:error_rec_combined}
 e_n=(1-\alpha)e_{n-1}+(1-\alpha)\bigl(J_s^{(n-1)}-J_s^{(n)}\bigr)+\alpha\xi_n.
\end{equation}

Define the drift increment $\delta_n:=J_s^{(n-1)}-J_s^{(n)}$. Repeated substitution of \eqref{eq:error_rec_combined} yields
\begin{equation}
\label{eq:decomp_combined}
e_n
=
(1-\alpha)^n e_0
+
(1-\alpha)\sum_{k=1}^{n}(1-\alpha)^{\,n-k}\,\delta_k
+
\alpha\sum_{k=1}^{n}(1-\alpha)^{\,n-k}\,\xi_k.
\end{equation}
For convenience, denote the three terms in \eqref{eq:decomp_combined} by
\[
A_n:=(1-\alpha)^n e_0,\qquad
B_n:=(1-\alpha)\sum_{k=1}^{n}(1-\alpha)^{\,n-k}\,\delta_k,\qquad
C_n:=\alpha\sum_{k=1}^{n}(1-\alpha)^{\,n-k}\,\xi_k,
\]
so that $e_n=A_n+B_n+C_n$.

The initialization term satisfies
\[
|A_n|=(1-\alpha)^n|e_0|.
\]
For the drift term, apply the triangle inequality and $\delta_k=J_s^{(k-1)}-J_s^{(k)}$:
\[
|B_n|
\le
(1-\alpha)\sum_{k=1}^{n}(1-\alpha)^{\,n-k}|\delta_k|
=
(1-\alpha)\sum_{k=1}^{n}(1-\alpha)^{\,n-k}\bigl|J_s^{(k)}-J_s^{(k-1)}\bigr|.
\]
By assumption (A3) this discounted drift sum is at most $V$, hence
\begin{equation}
\label{eq:Bn_bound_combined}
|B_n|\le (1-\alpha)V.
\end{equation}

We next control the stochastic term $C_n$ in mean square. From the definition of $C_n$,
\[
C_n=\alpha\sum_{k=1}^{n} w_{k,n}\,\xi_k,\qquad
w_{k,n}:=(1-\alpha)^{n-k}.
\]
Expanding $C_n^2$ and taking expectations gives
\[
\mathbb{E}[C_n^2]
=
\alpha^2\,
\mathbb{E}\!\left[
\sum_{k=1}^{n}\sum_{j=1}^{n}
w_{k,n}w_{j,n}\,\xi_k\xi_j
\right].
\]
We show that cross terms vanish: for $j<k$,
$\xi_j$ is $\mathcal{F}_{k-1}$-measurable, hence
\[
\mathbb{E}[\xi_k\xi_j]
=
\mathbb{E}\!\left[\mathbb{E}[\xi_k\xi_j\mid \mathcal{F}_{k-1}]\right]
=
\mathbb{E}\!\left[\xi_j\,\mathbb{E}[\xi_k\mid \mathcal{F}_{k-1}]\right]
=0,
\]
where the last equality uses (A1).
Therefore only diagonal terms remain:
\[
\mathbb{E}[C_n^2]
=
\alpha^2\sum_{k=1}^{n} w_{k,n}^2\,\mathbb{E}[\xi_k^2].
\]
Using (A2) and iterated expectation,
\[
\mathbb{E}[\xi_k^2]
=
\mathbb{E}\!\left[\mathbb{E}[\xi_k^2\mid \mathcal{F}_{k-1}]\right]
\le
\sigma^2,
\]
so
\[
\mathbb{E}[C_n^2]
\le
\alpha^2\sigma^2
\sum_{k=1}^{n} (1-\alpha)^{2(n-k)}.
\]
The remaining sum is geometric:
\[
\sum_{k=1}^{n} (1-\alpha)^{2(n-k)}
=
\sum_{j=0}^{n-1}(1-\alpha)^{2j}
\le
\frac{1}{1-(1-\alpha)^2}
=
\frac{1}{2\alpha-\alpha^2}
\le
\frac{1}{\alpha}.
\]
Hence
\begin{equation}
\label{eq:Cn_second_moment_combined}
\mathbb{E}[C_n^2]\le \alpha\sigma^2
\qquad\Rightarrow\qquad
\sqrt{\mathbb{E}[C_n^2]}\le \sigma\sqrt{\alpha}.
\end{equation}

Combining the above bounds, from $e_n=A_n+B_n+C_n$ and the triangle inequality,
\[
|e_n|\le |A_n|+|B_n|+|C_n|.
\]
Taking $L_2$-norms (i.e., $\|X\|_2:=\sqrt{\mathbb{E}[X^2]}$) and using Minkowski's inequality,
\[
\sqrt{\mathbb{E}[e_n^2]}
\le
\sqrt{\mathbb{E}[A_n^2]}+\sqrt{\mathbb{E}[B_n^2]}+\sqrt{\mathbb{E}[C_n^2]}.
\]
Since $A_n$ and $B_n$ are deterministic given $e_0$ and $\{J_s^{(k)}\}$,
$\sqrt{\mathbb{E}[A_n^2]}=|A_n|$ and $\sqrt{\mathbb{E}[B_n^2]}=|B_n|$.
Applying the deterministic bounds above and \eqref{eq:Cn_second_moment_combined} gives
\[
\sqrt{\mathbb{E}[e_n^2]}
\le
(1-\alpha)^n|e_0|+(1-\alpha)V+\sigma\sqrt{\alpha},
\]
which is exactly \eqref{eq:ema_tracking_bound_combined}.

\paragraph{(ii) Uniform high-probability bound.}
We now prove the uniform high-probability bound. Fix an $n\in\{1,\dots,N\}$. By the decomposition $e_n=A_n+B_n+C_n$ and the deterministic bounds above,
\[
|e_n|
\le
|A_n|+|B_n|+|C_n|
\le
(1-\alpha)^n|e_0|+(1-\alpha)V+|C_n|.
\]
Thus it suffices to upper bound $|C_n|$ with high probability.

Recall $C_n=\sum_{k=1}^{n} X_k$ where we define
\[
X_k := \alpha\,w_{k,n}\,\xi_k,
\qquad
w_{k,n}:=(1-\alpha)^{n-k}.
\]
We claim that $(X_k)_{k=1}^{n}$ is a martingale difference sequence with respect to $(\mathcal{F}_k)_{k=0}^{n}$.
Indeed, since $w_{k,n}$ is deterministic and $\xi_k$ is $\mathcal{F}_k$-measurable, $X_k$ is $\mathcal{F}_k$-measurable.
Moreover, by (A1),
\[
\mathbb{E}[X_k\mid \mathcal{F}_{k-1}]
=
\alpha w_{k,n}\,\mathbb{E}[\xi_k\mid \mathcal{F}_{k-1}]
=
0.
\]
By (A4) and $w_{k,n}\le 1$,
\[
|X_k|
=
\alpha w_{k,n}|\xi_k|
\le
\alpha b.
\]
Finally, by (A2),
\[
\mathbb{E}[X_k^2\mid \mathcal{F}_{k-1}]
=
\alpha^2 w_{k,n}^2\,\mathbb{E}[\xi_k^2\mid \mathcal{F}_{k-1}]
\le
\alpha^2 w_{k,n}^2\,\sigma^2.
\]
Summing over $k=1,\dots,n$,
\[
\sum_{k=1}^{n}\mathbb{E}[X_k^2\mid \mathcal{F}_{k-1}]
\le
\alpha^2\sigma^2\sum_{k=1}^{n}w_{k,n}^2
=
\alpha^2\sigma^2\sum_{k=1}^{n}(1-\alpha)^{2(n-k)}
\le
\alpha^2\sigma^2\cdot\frac{1}{\alpha}
=
\alpha\sigma^2,
\]
where we used the same geometric-sum bound as above.
Thus the predictable quadratic variation is at most $v:=\alpha\sigma^2$.

Freedman's inequality (for martingale differences with bounded increments) states that if
$\sum_{k=1}^{n}\mathbb{E}[X_k^2\mid \mathcal{F}_{k-1}]\le v$ with high probability and $|X_k|\le c$ with high probability,
then for any $\eta\in(0,1)$,
\[
\Pr\!\left(\sum_{k=1}^{n}X_k \ge \sqrt{2v\log(1/\eta)}+\frac{2}{3}c\log(1/\eta)\right)\le \eta.
\]
We apply this with $c=\alpha b$, $v=\alpha\sigma^2$, and $\sum_{k=1}^{n}X_k=C_n$. This gives
\[
\Pr\!\left(C_n \ge \sqrt{2\alpha\sigma^2\log(1/\eta)}+\frac{2}{3}\alpha b\log(1/\eta)\right)\le \eta.
\]
Applying the same bound to $-C_n$ yields
\[
\Pr\!\left(|C_n| \ge \sqrt{2\alpha\sigma^2\log(2/\eta)}+\frac{2}{3}\alpha b\log(2/\eta)\right)\le \eta,
\]
where we used a union bound for the two tails.

Set $\eta:=\delta/N$ and apply the above inequality to each $n\in\{1,\dots,N\}$.
Then for each fixed $n$,
\[
\Pr\!\left(|C_n| \ge \sqrt{2\alpha\sigma^2\log\!\frac{2N}{\delta}}+\frac{2}{3}\alpha b\log\!\frac{2N}{\delta}\right)\le \frac{\delta}{N}.
\]
Taking a union bound over $n=1,\dots,N$ shows that with probability at least $1-\delta$,
simultaneously for all $n\in\{1,\dots,N\}$,
\begin{equation}
\label{eq:Cn_uniform_bound}
|C_n|
\le
\sqrt{2\alpha\sigma^2\log\!\frac{2N}{\delta}}
+\frac{2}{3}\alpha b\log\!\frac{2N}{\delta}.
\end{equation}

On the same event, for every $n\le N$,
\[
|e_n|
\le
(1-\alpha)^n|e_0|+(1-\alpha)V+|C_n|.
\]
Substituting \eqref{eq:Cn_uniform_bound} gives exactly \eqref{eq:ema_hp_bound_uniform}.
This completes the proof.
\end{proof}
\section{Additional Experiments}\label{supp:additional_exp}
\subsection{Implementation Details}

We adopt the same experimental setup used in~\cite{kim2025random}. All models are trained using the AdamW optimizer with a learning rate of $5\times10^{-5}$. Training is performed on four NVIDIA A100 GPUs with 80GB memory using a batch size of 256. 

We employ the same loss functions described in Eq.~(1) and Eq.~(2) in the main paper, with equal weights of 1. The feature-level distillation loss is applied after each block of the U-Net architecture.

\subsection{Embedding Similarity and Training Signal Consistency}
\label{sec:appendix_embedding_signal}

ARIA clusters the conditioning space using text embeddings.
To validate that this geometry is meaningful for optimization, we study
whether prompts that are close in embedding space also induce similar
distillation signals.

For a noisy latent $x_t$ and conditioning $c$, the distillation residual is
\[
r(x_t,c)=\hat{\epsilon}_S(x_t,c)-\hat{\epsilon}_T(x_t,c),
\]
where $\hat{\epsilon}_S$ and $\hat{\epsilon}_T$ are the student and teacher
denoising predictions.
Since this residual determines the update applied to the student, we
analyze both the residual similarity itself and the similarity between
the corresponding gradient directions.

Figure~\ref{fig:embedding_training_signal} shows the cosine distance
between training signals as a function of cosine similarity in text
embedding space.
The left subfigure reports denoising residual distance and the right
subfigure reports gradient direction distance, each at two noise levels.
In all cases, higher embedding similarity corresponds to smaller
training-signal distance.
This indicates that nearby prompts in the conditioning space tend to
produce similar optimization signals, supporting the use of
embedding-based clustering for defining regions in ARIA.

\begin{figure*}[htb!]
\centering
\begin{subfigure}{0.49\linewidth}
    \centering
    \includegraphics[width=\linewidth]{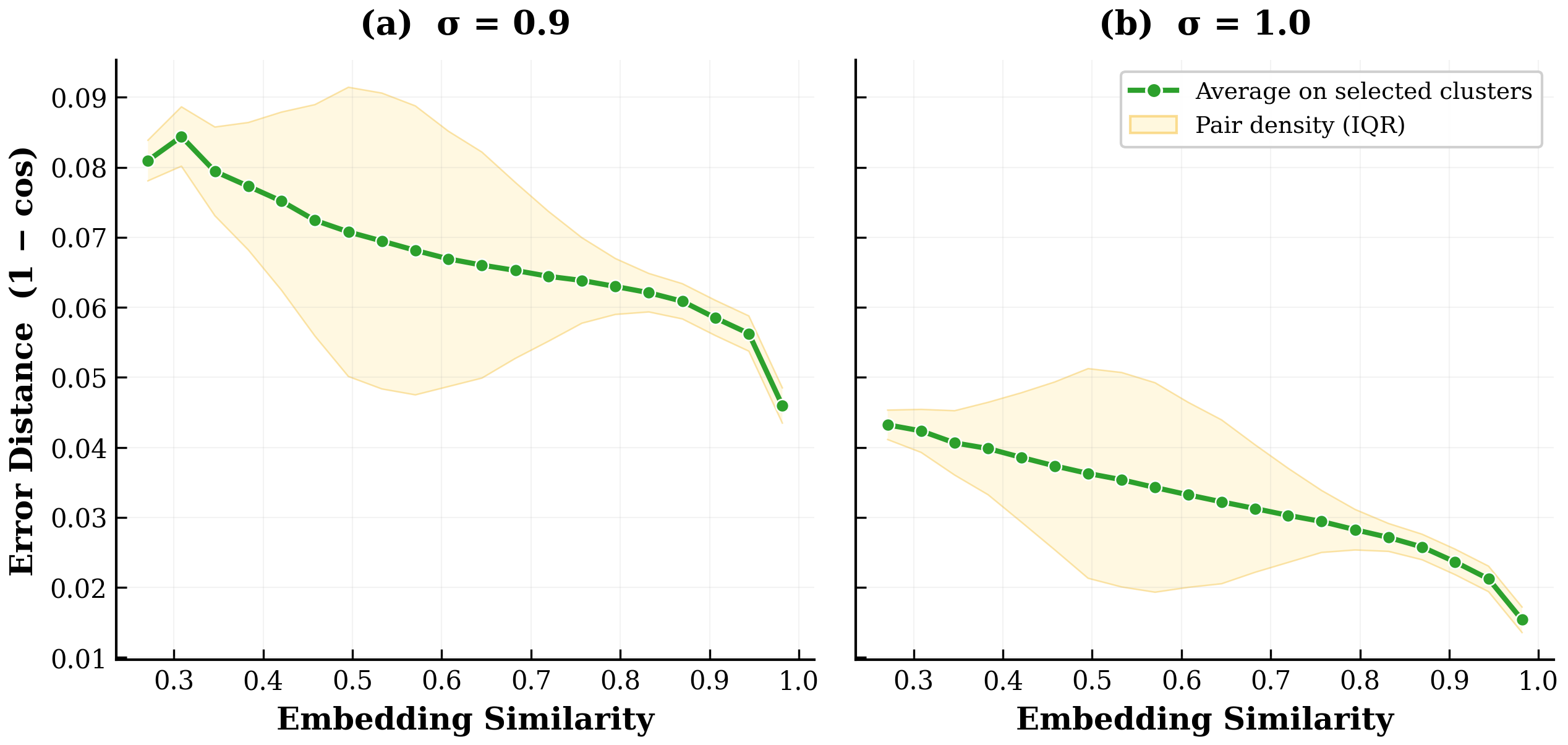}
    \caption{Denoising residual distance versus embedding similarity.}
    \label{fig:embed_noise_corr}
\end{subfigure}
\hfill
\begin{subfigure}{0.49\linewidth}
    \centering
    \includegraphics[width=\linewidth]{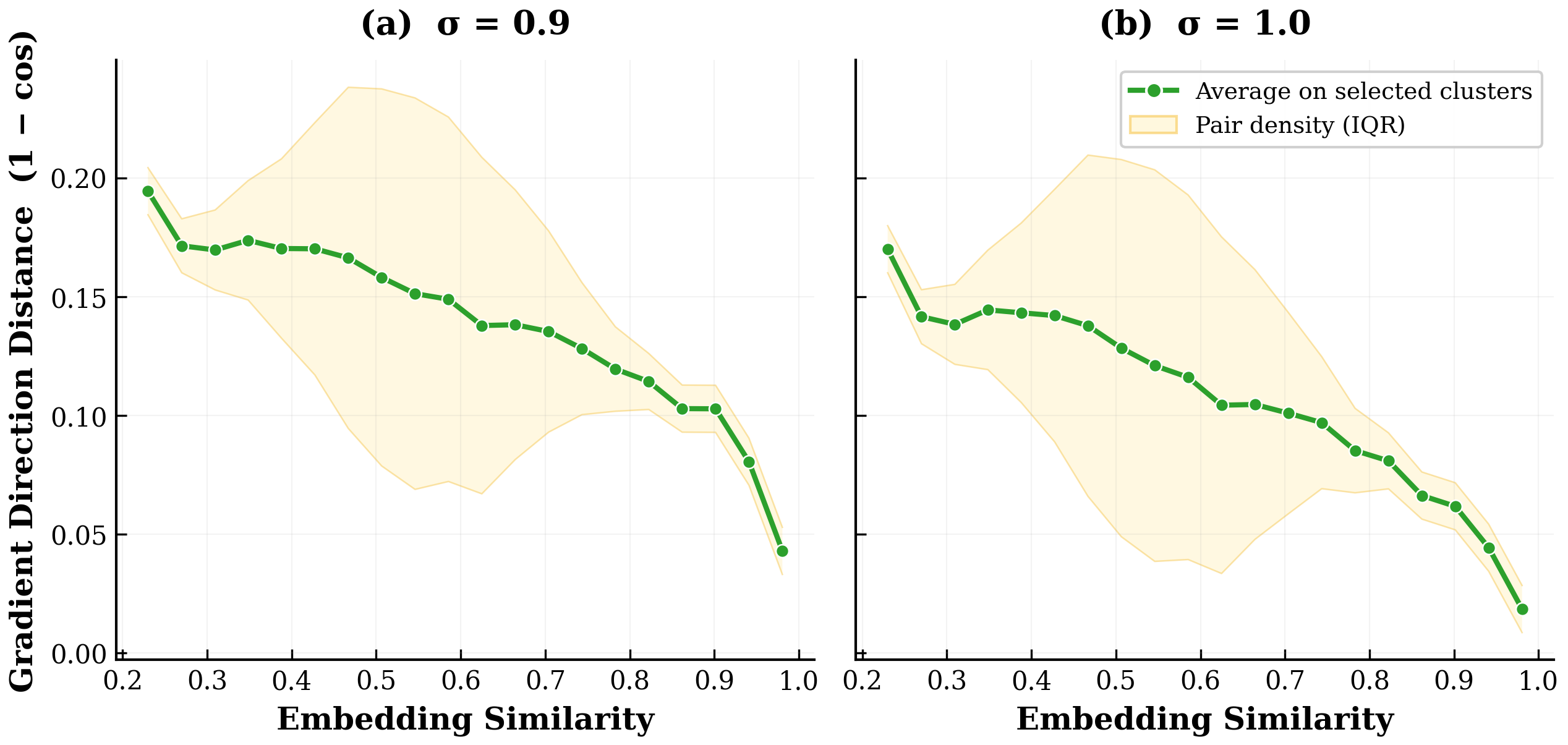}
    \caption{Gradient direction distance versus embedding similarity.}
    \label{fig:embed_grad_corr}
\end{subfigure}
\caption{
Relationship between text embedding similarity and the similarity of the
distillation training signal.
Each subfigure already contains two panels corresponding to $\sigma=0.9$
and $\sigma=1.0$.
The left subfigure shows denoising residual distance, while the right
subfigure shows gradient direction distance.
Across both noise levels and both signal types, prompts that are closer
in embedding space consistently induce more similar training signals.
The green curves show the average over selected clusters, and the shaded
regions indicate the interquartile range of pairwise distances.
}
\label{fig:embedding_training_signal}
\end{figure*}

\subsection{Sampling Distribution Analysis}
\label{sec:temperature_sampling}

After clustering the text corpus into semantic regions, ARIA maintains a
difficulty score $D_s$ for each cluster using the EMA statistics described in
Sec.~2. The remaining design choice is how these
difficulty scores are converted into a sampling distribution over clusters.

To control the sharpness of the distribution, we first apply a power
transformation to the difficulty scores
\begin{equation}
    w_s = D_s^\beta,
\end{equation}
with $\beta=0.6$ in our experiments. This transformation amplifies relative
differences between clusters and produces a smoother long-tailed distribution
over difficulty levels.

We then convert the transformed scores into sampling probabilities using one of
two normalization schemes. The first is a power-law allocation
\begin{equation}
    p_s = \frac{w_s}{\sum_{j=1}^{S} w_j},
\end{equation}
which directly allocates probability proportionally to the transformed
difficulty. The second uses a temperature-scaled softmax
\begin{equation}
    p_s = \frac{\exp(w_s / T)}{\sum_{j=1}^{S} \exp(w_j / T)},
\end{equation}
where the temperature $T$ controls the concentration of probability mass.
Large values of $T$ produce distributions close to uniform sampling, while
smaller values increasingly concentrate probability on the hardest clusters.

Figure~\ref{fig:temperature} illustrates the resulting sampling behavior.
Figure~\ref{fig:temperature}(a) shows the probability distributions over
clusters ranked by difficulty. Uniform sampling allocates equal probability to
all clusters, whereas decreasing the softmax temperature or using power-law
weighting increasingly concentrates probability on the hardest clusters.

However, overly aggressive focusing can reduce coverage of easier regions.
Fig.~\ref{fig:temperature}(b) shows that sharper distributions (e.g.,
$T{=}0.05$ or power-law weighting) allocate a larger fraction of samples to the
top 10\% hardest clusters while reducing the share assigned to the easiest
clusters. In our experiments, power-law weighting improved performance for
teacher-initialized models, but only matched or sometimes degraded performance
for randomly initialized students.

Finally, Fig.~\ref{fig:temperature}(c) reports the Spearman correlation between
cluster difficulty and sampling frequency over training. Lower temperatures and
power-law sampling produce stronger alignment between the sampling probability
and the estimated cluster difficulty.

Based on this analysis, we adopt $T{=}0.1$, which provides a moderate focusing
effect while preserving sufficient sampling coverage across all clusters.

\begin{figure}[t]
    \centering
    \includegraphics[width=\linewidth]{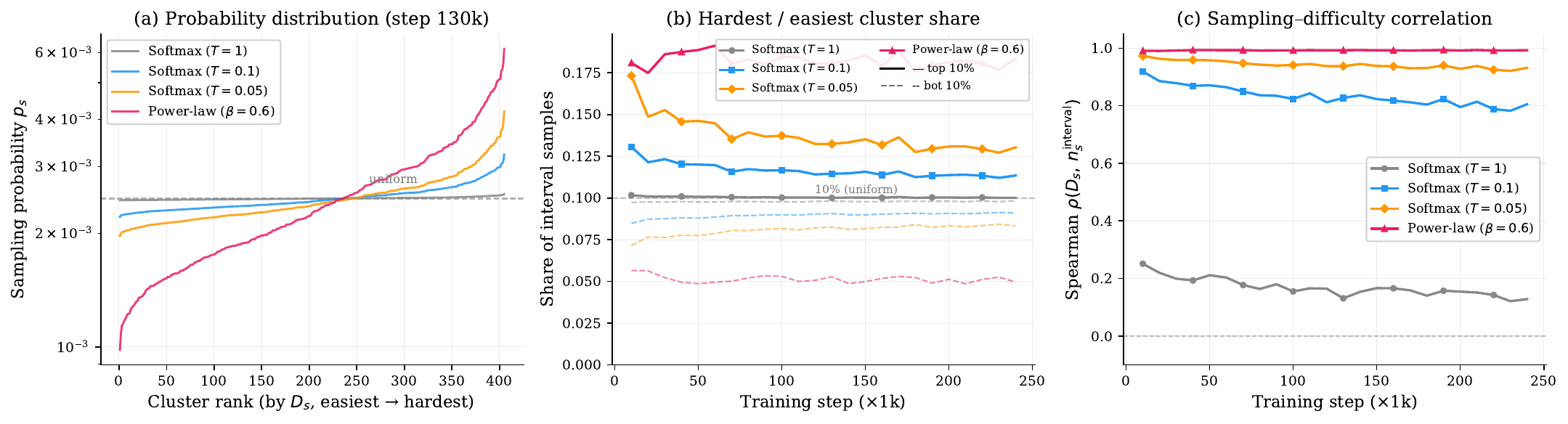}
    \caption{
    Analysis of sampling distributions over semantic clusters.
    (a) Probability distribution over clusters ranked by difficulty.
    Lower temperatures concentrate probability mass on harder clusters,
    while power-law weighting produces the sharpest allocation.
    (b) Fraction of samples allocated to the hardest and easiest 10\% of clusters during training.
    Aggressive focusing increases coverage of difficult regions but reduces sampling of easy clusters.
    (c) Spearman correlation between cluster difficulty $D_s$ and sampling frequency.
    Lower temperatures and power-law sampling produce stronger alignment with cluster difficulty.
    }
    \label{fig:temperature}
\end{figure}
\subsection{Additional Empirical Evaluation}
\label{sec:supp_additional_eval}

In what follows, we present additional empirical evaluations that further
illustrate the training dynamics of ARIA compared to Random Conditioning
(RC). Figure~\ref{fig:training_curves} reports the evolution of standard
generation metrics across training checkpoints for the C-Micro student
architecture.

Specifically, we track the Fréchet Inception Distance (FID), CLIP score,
and Inception Score (IS) as training progresses from $25$K to $300$K
iterations. These curves provide a more detailed view of the convergence
behavior discussed in the main paper.

As shown in Fig.~\ref{fig:training_curves}, ARIA consistently achieves
better performance earlier during training. In particular, ARIA reaches
lower FID and higher CLIP scores at substantially earlier checkpoints,
indicating faster alignment between the student and teacher models.
A similar trend is observed for the Inception Score, where ARIA maintains
higher values throughout most of the training trajectory.

These results further support the observation that ARIA improves the
efficiency of the distillation process by allocating training updates to
more informative regions of the conditioning space. As a result, the
student model converges faster while incurring negligible computational
overhead compared to RC.

\begin{figure}[t]
\centering
\includegraphics[width=\linewidth]{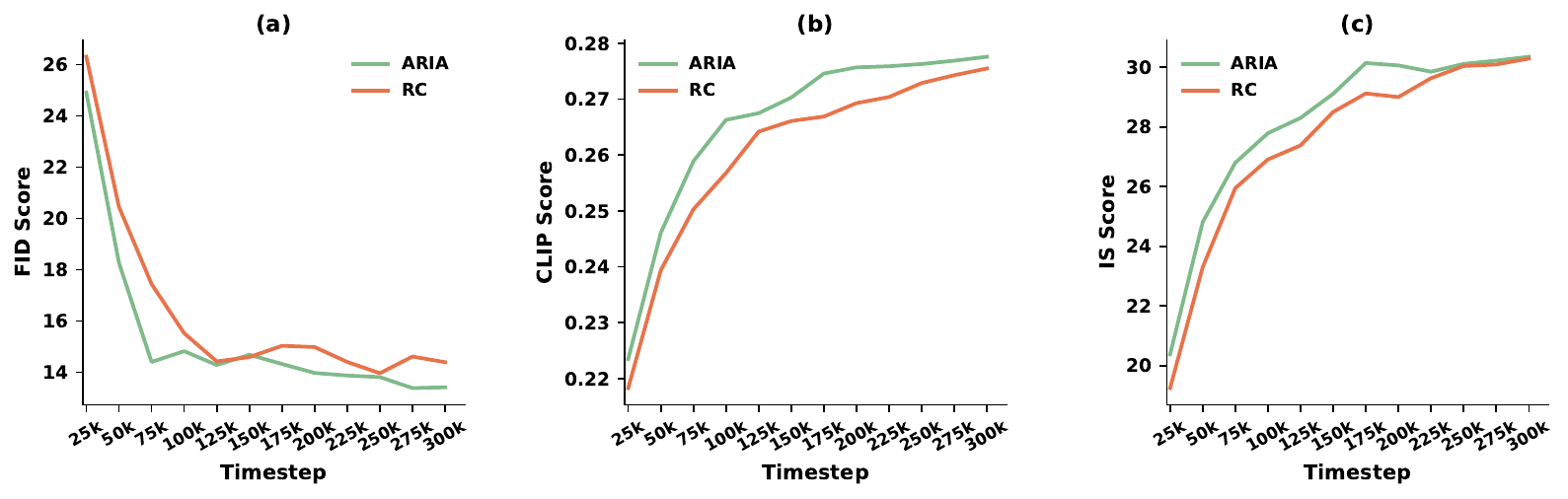}
\caption{
Training dynamics for the C-Micro student architecture.
We report (a) FID, (b) CLIP score, and (c) Inception Score (IS) across
training checkpoints from $25$K to $300$K iterations.
ARIA consistently achieves lower FID and higher CLIP scores earlier in
training compared to Random Conditioning (RC), demonstrating faster
convergence of the distilled student model.
}
\label{fig:training_curves}
\end{figure}

\begin{tcolorbox}[colback=blue!3!white,colframe=blue!60!black,title=Key Takeaway,width=\linewidth]
\textbf{ARIA significantly improves training efficiency by accelerating convergence of randomly initialized student models.}
Across training checkpoints, ARIA consistently achieves lower FID and higher CLIP scores earlier than RC,
indicating that the student reaches strong performance with substantially fewer optimization steps.

This property is particularly important in practical distillation settings where the student architecture differs from the teacher,
for example when training compressed or lightweight models that cannot reuse the teacher's weights.
In such scenarios the student must be trained from random initialization, making convergence speed a critical factor.

By directing updates toward harder regions of the conditioning space, ARIA allocates the training budget more effectively,
allowing high-quality students to be obtained with reduced compute. This is especially valuable in resource-constrained
environments where training large diffusion models is expensive and iterative experimentation (e.g., architecture search or
hyperparameter tuning) is required.
\end{tcolorbox}

\begin{figure}[p]
    \centering
    \includegraphics[width=\textwidth,height=0.8\textheight,keepaspectratio]{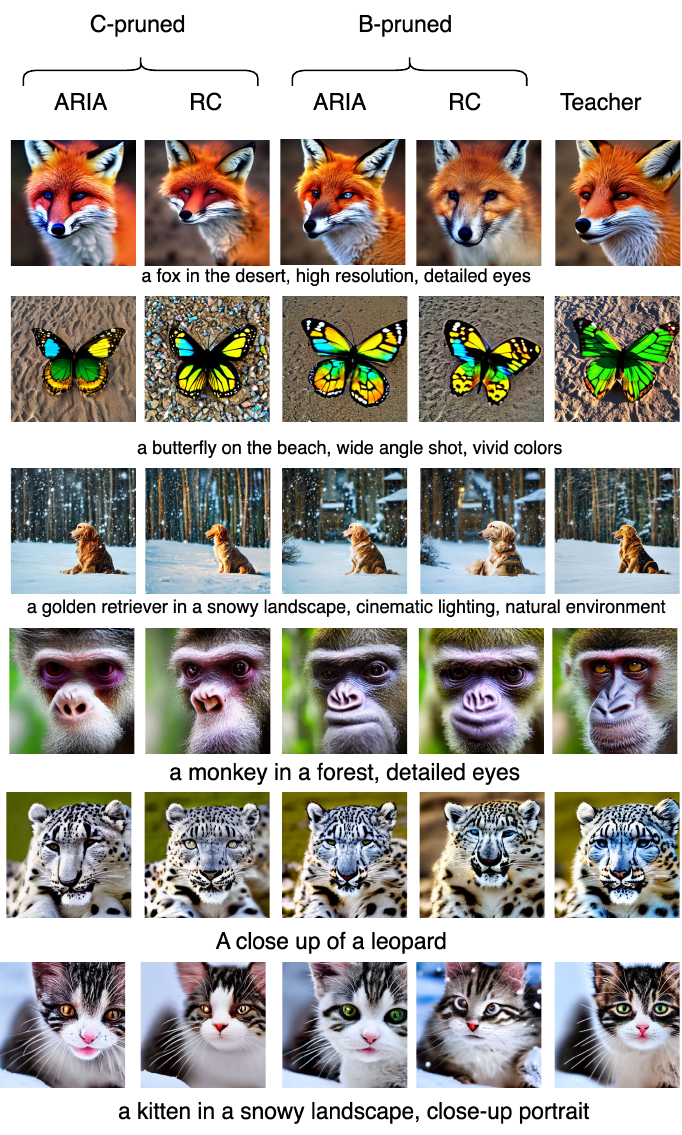}
    \caption{Qualitative comparison on animal-related prompts on models that were trained on animal-free images. From left to right, we show outputs from the channel-pruned student trained with ARIA, the same channel-pruned student trained with RC, the block-pruned student trained with ARIA, the same block-pruned student trained with RC, and the teacher.}
    \label{fig:appendix_animals_aria_vs_rc}
\end{figure}

\begin{figure}[p]
    \centering
    \includegraphics[width=\textwidth,height=0.8\textheight,keepaspectratio]{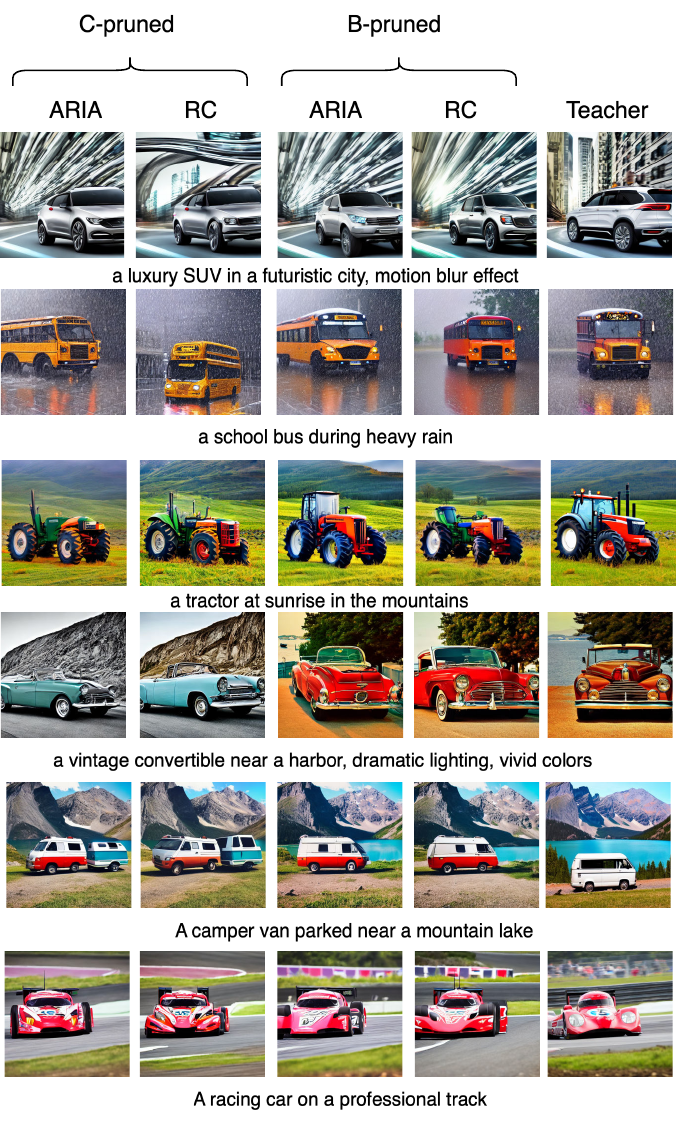}
    \caption{Qualitative comparison on vehicles-related prompts on models that were trained on images without vehicles.}
    \label{fig:appendix_vehicles_aria_vs_rc}
\end{figure}
\begin{figure}[p]
    \centering
    \includegraphics[width=\textwidth,height=0.92\textheight,keepaspectratio]{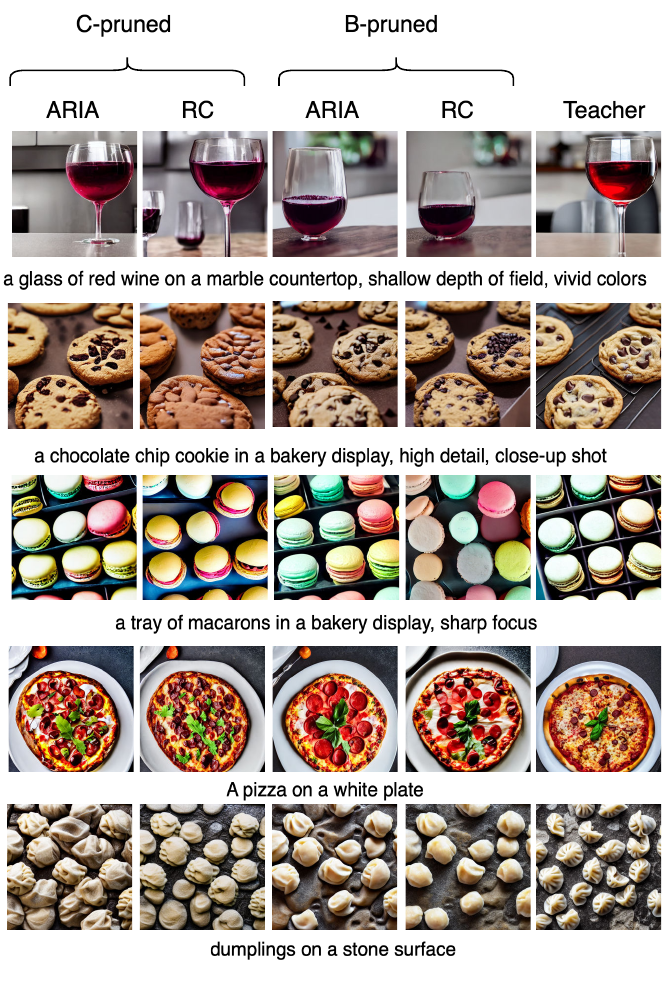}
    \caption{Qualitative comparison on food-related prompts on models that were trained on images without food. }
    \label{fig:appendix_food_aria_vs_rc}
\end{figure}
\begin{figure}[p]
    \centering
    \includegraphics[width=\textwidth,height=0.92\textheight,keepaspectratio]{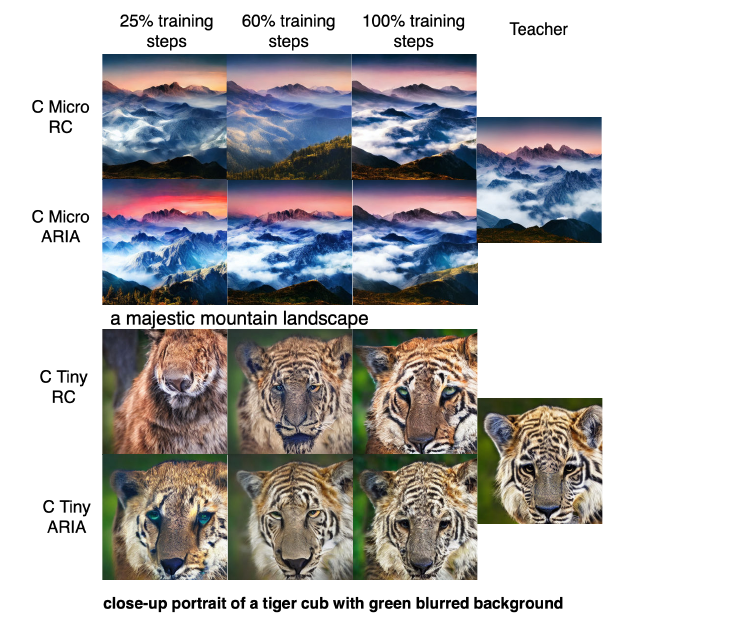}
    \caption{In our experiments, we observed that training with ARIA significantly improves convergence speed. This figure provides a qualitative comparison across training checkpoints, illustrating that ARIA converges faster than RC. We compare generations at 25\%, 60\%, and 100\% of training for two students trained from random initialization: C-Micro and C-Tiny.}
    \label{fig:appendix_checkpoints_aria_faster_convergence}
\end{figure}
\section{Clustering the data}\label{sec:clustering}
\subsection{Constructing Semantic Regions via CLIP Clustering}
\label{sec:region_construction}

Recall that our framework is agnostic to the specific procedure used to divide the training samples space into regions. In principle, any method that partitions the training samples space into  meaningful groups can be used. In our implementation, we instantiate regions by clustering text prompts in the embedding space of a pretrained CLIP text encoder. The main idea is that prompts that are close in CLIP space tend to share semantic content, and therefore clustering this space provides a practical way to define regions over which our cluster-level discrepancy statistics can be maintained. 

More concretely, let $\mathcal{C}=\{c_1,\dots,c_n\}$ denote the set of text conditions used to define the conditioning pool, and let $e_i \in \mathbb{R}^d$ be the CLIP embedding of condition $c_i$. We partition $\{e_i\}_{i=1}^n$ into $K$ clusters using $k$-means. Since the quality of the partition depends on the choice of $K$, we do not fix it a priori. Instead, we select $K$ based on the silhouette score~\cite{shahapure2020cluster}, which is a widely used criterion for evaluating the quality of a clustering by balancing within-cluster cohesion against between-cluster separation. Intuitively, a good value of $K$ should produce clusters whose members are close to one another while remaining well separated from points assigned to other clusters.  

A direct sweep over many candidate values of $K$ on the full embedding set is very expensive. Running $k$-means repeatedly on a very large prompt collection or training samples is already costly, and computing clustering-quality diagnostics on top of that adds further overhead. To make this step practical, we use a $k$-means coreset. A coreset is a weighted subset of points that approximately preserves the $k$-means objective of the original dataset, and therefore can be used as a much smaller proxy for downstream clustering computations. In our setting, this allows us to evaluate many candidate values of $K$ using only a compact weighted representative subset, instead of repeatedly clustering the full prompt pool. 

We want to use the sensitivity sampling framework  to compute a coreset for a set of points $P$ in $\mathcal{R}^d$. Sensitivity-based coreset constructions typically rely on an initial
approximate clustering solution in order to bound the importance
(sensitivity) of each point with respect to the $k$-means objective.
This is commonly obtained through a bicriteria solution, which provides
a constant-factor approximation to the optimal $k$-means cost while
possibly using more than $k$ centers. Such a solution allows one to
derive upper bounds on point sensitivities that can be used for
importance sampling.

\paragraph{Bicriteria initialization via $k$-means++.}
To build the coreset, we first obtain a bicriteria solution using $k$-means++ seeding. $k$-means++ chooses the first center uniformly at random and then samples each next center with probability proportional to the squared distance to the nearest previously chosen center. Arthur and Vassilvitskii~\cite{arthur2007k} showed that this $D^2$-sampling procedure yields an $O(\log K)$ approximation in expectation to the optimal $k$-means objective; specifically, their Theorem~3.1 proves
\[
\mathbb{E}[\phi] \le 8(\ln K + 2)\phi_{\mathrm{OPT}}.
\]
They also note that the running time of the seeding procedure is $O(nKd)$ for $n$ points in $\mathbb{R}^d$. In the language of coreset construction, this provides an $(\alpha,\beta)$ bicriteria solution with $\alpha = O(\log K)$ and $\beta=1$.  

\paragraph{Sensitivity-based coreset construction.}
Given this bicriteria solution, we follow the sensitivity-sampling framework of Braverman et al.~\cite{braverman2021efficient}. In particular, plugging the $(\alpha,\beta)$ assignment induced by $k$-means++ into their coreset construction yields an efficiently computable upper bound on the sensitivity of each point. Their framework samples points proportionally to such upper bounds, and Algorithm~1 constructs a weighted coreset whose size depends on the total sensitivity rather than directly on the original dataset size.  

In our implementation, after obtaining the $k$-means++ centers, each point $p$ is assigned to its closest center, inducing a cluster $\cluster(p)$ with center $\mu(\cluster(p))$. We then use the following sensitivity upper bound:
\[
s(p)
=
\frac{4\bigl(8(\log K + 2)+1\bigr)}{|\cluster(p)|}
+
\frac{16(\log K + 2)\,\|p-\mu(\cluster(p))\|_2^2}
{\sum_{q \in \cluster(p)} \|q-\mu(\cluster(p))\|_2^2 }.
\]
This is exactly the form obtained by combining the $k$-means++ bicriteria guarantee with the sensitivity framework of Braverman et al.~\cite{braverman2021efficient}. The first term increases the importance of small clusters, preventing them from being ignored, while the second term increases the importance of points that lie far from their assigned cluster center, which are precisely the points that contribute more to the $k$-means cost. Sampling according to these sensitivities therefore yields a weighted subset that better preserves the geometry relevant to clustering than uniform subsampling.  

Braverman et al.~\cite{braverman2021efficient} further show that, once such sensitivity upper bounds are available, one can construct an $\varepsilon$-coreset of size
\[
O\!\left(\frac{t}{\varepsilon^2}\bigl(d\log t + \log(1/\delta)\bigr)\right),
\]
where $t$ is the total sensitivity upper bound. For the $k$-means setting induced by the bicriteria above, this gives a coreset size scaling on the order of
\[
O\!\left(\frac{K d \log K}{\varepsilon^2}\right)
\]
up to logarithmic factors, and Theorem~34 in their supplement guarantees that the resulting weighted sample is an $\varepsilon$-coreset with high probability. Importantly for our use case, that theorem applies simultaneously to every $k' < K$ once the sensitivities are computed using a fixed upper bound $K$. 

\paragraph{Selecting the number of clusters.}
After constructing the coreset, we sweep over candidate values of $K$ and run weighted $k$-means on the coreset for each candidate. For each resulting partition, we compute the silhouette score and select the value with the highest score. In our experiments, we compute sensitivities once using $K_{max}=1000$, construct the coreset with respect to this value, and then sweep over candidate numbers of clusters in the range $K' \in [200,1000]$. The final value of $K$ is the one that maximizes the silhouette score on the coreset. Once this value is chosen, we run the final $k$-means clustering with that $K$ to define the semantic regions used by our method. Since the coreset is only used for model selection of the clustering granularity, and not as a replacement for the EMA mechanism itself, this preprocessing stage remains fully modular and can be replaced by other region-construction procedures in future work.
\setcounter{algorithm}{1}

\begin{algorithm}[t]
\caption{Region construction via coreset-based clustering}
\label{alg:region_construction}
\small
\begin{enumerate}
    \item Embed all prompts using the CLIP text encoder, obtaining
    $X=\{x_i\}_{i=1}^N \subset \mathbb{R}^d$.
    
    \item Run $k$-means++ with $K_{\max}$ centers on $X$ to obtain a bicriteria
    solution.

    \item Compute sensitivity upper bounds using the bicriteria solution,
    and sample a weighted coreset $(S,w)$.

    \item For each candidate number of clusters $k \in \mathcal{K}$:
    \begin{enumerate}
        \item run weighted $k$-means on $(S,w)$ with $k$ clusters;
        \item compute the corresponding silhouette score
        $\mathrm{Sil}_{(S,w)}(k)$.
    \end{enumerate}

    \item Select
    \[
    \hat{k}=\arg\max_{k\in\mathcal{K}} \mathrm{Sil}_{(S,w)}(k).
    \]

    \item Run $k$-means on the full embedding set with $k=\hat{k}$ and
    use the resulting partition to define the semantic regions.
\end{enumerate}
\end{algorithm}

\section{Ablation Studies}
\label{sec:ablation}

We analyze several design choices in ARIA: 
(i) the mapping used to convert region discrepancy scores into sampling probabilities, 
(ii) the number of regions used to partition the conditioning space, 
(iii) the discrepancy signal used for adaptive allocation, 
and (iv) the behavior of the EMA-based discrepancy tracker.
All ablations follow the same setup as Sec.~\ref{sec:exp} and use the BK-Base student model.

\paragraph{Sampling mapping.}
ARIA converts region discrepancy scores into sampling probabilities using a mapping $\mathcal{S}$.
We compare two choices: softmax ($p_k \propto \exp(D_k)$) and power-law ($p_k \propto D_k$).
Table~\ref{tab:ablation} shows that both mappings outperform uniform sampling.
Softmax yields the best FID and CLIP scores, suggesting moderate prioritization of difficult regions is more effective than overly concentrated allocation.



\paragraph{Number of regions.}
We study the impact of the number of regions $K$ used to partition the conditioning space.
The value $K=405$ is selected by the clustering procedure described in Sec.~D in the Supp. Matt.
As shown in Table~\ref{tab:ablation}, ARIA improves over uniform sampling across all tested $K$.
Moderate granularity ($K=405$) yields the best FID and IS, while larger partitions ($K=2000$) slightly improve CLIP.
Performance remains stable across region counts, indicating that ARIA does not require precise tuning of $K$.
\begin{table}[t]
\centering
\small
\setlength{\tabcolsep}{6pt}
\renewcommand{\arraystretch}{1.12}

\begin{tabular}{l l c c c}
\toprule
\textbf{Setting} & \textbf{Variant} & FID$\downarrow$ & IS$\uparrow$ & CLIP$\uparrow$ \\
\midrule

\textbf{Baseline}
& Uniform (RC) & 15.16 & 35.90 & 29.15 \\

\midrule

\multirow{2}{*}{Sampling mapping}
& Softmax & \textbf{13.82} & 36.52 & \textbf{29.28} \\
& Power-law & 14.70 & \textbf{36.57} & 29.25 \\

\midrule

\multirow{3}{*}{\# Regions $K$}
& $200$ & 14.51 & 35.94 & 29.18 \\
& $405$ & \textbf{13.82} & \textbf{36.52} & 29.28 \\
& $2000$ & 14.92 & 36.36 & \textbf{29.31} \\

\midrule

\multirow{3}{*}{Tracked discrepancy}
& $\mathcal{L}_{\mathrm{out}}$ & \textbf{13.82} & \textbf{36.52} & \textbf{29.28} \\
& $\mathcal{L}_{\mathrm{feat}}$ & 14.81 & 36.02 & 29.13 \\
& $\mathcal{L}_{\mathrm{out}} + \mathcal{L}_{\mathrm{feat}}$ & 14.46 & 36.32 & 29.18 \\

\bottomrule
\end{tabular}

\caption{Ablation study of ARIA on the BK-Base student model. We analyze the effect of the sampling mapping, the number of regions used to partition the conditioning space, and the discrepancy signal used for adaptive allocation.}
\label{tab:ablation}
\end{table}
\paragraph{Tracked discrepancy signal.}
ARIA reallocates training effort according to the magnitude of the teacher--student discrepancy.
We compare three allocation signals: the output-level loss $\mathcal{L}_{\mathrm{out}}$, the feature-level loss $\mathcal{L}_{\mathrm{feat}}$, and their combination.
As shown in Table~\ref{tab:ablation}, ARIA improves over uniform sampling for all objectives.
However, the largest gains occur when allocation is driven by $\mathcal{L}_{\mathrm{out}}$, which directly reflects mismatch in the predicted score function.
Feature-level discrepancies provide a weaker signal for adaptive allocation, as intermediate representations may remain aligned even when output-level mismatch persists.

\paragraph{EMA calibration of cluster difficulty.}
ARIA maintains cluster difficulty scores $D_s$ using an exponential moving average (EMA), which determines the sampling distribution over regions.
To verify that these online estimates reflect the true student--teacher discrepancy, we compare them against Monte Carlo estimates of the per-cluster loss at multiple training checkpoints on the C-Tiny architecture.
For each of the $S{=}405$ clusters, we sample $1000$ prompts, pair them with random latents, and add noise at timestep $t$ drawn from the same replacement distribution $p(t)$ used during training. We then evaluate the KD loss between the student and teacher and average the results to obtain the ground-truth score.

As shown in Fig.~\ref{fig:ema_calibration}, the EMA estimates closely track ground truth throughout training, with relative error below $4\%$ across checkpoints. Both quantities decay smoothly from ${\sim}2.4{\times}10^{-3}$ to ${\sim}1.0{\times}10^{-3}$, confirming that the lightweight tracker provides a reliable signal for focal sampling.

\begin{figure}[t]
    \centering
    \includegraphics[width=0.9\linewidth]{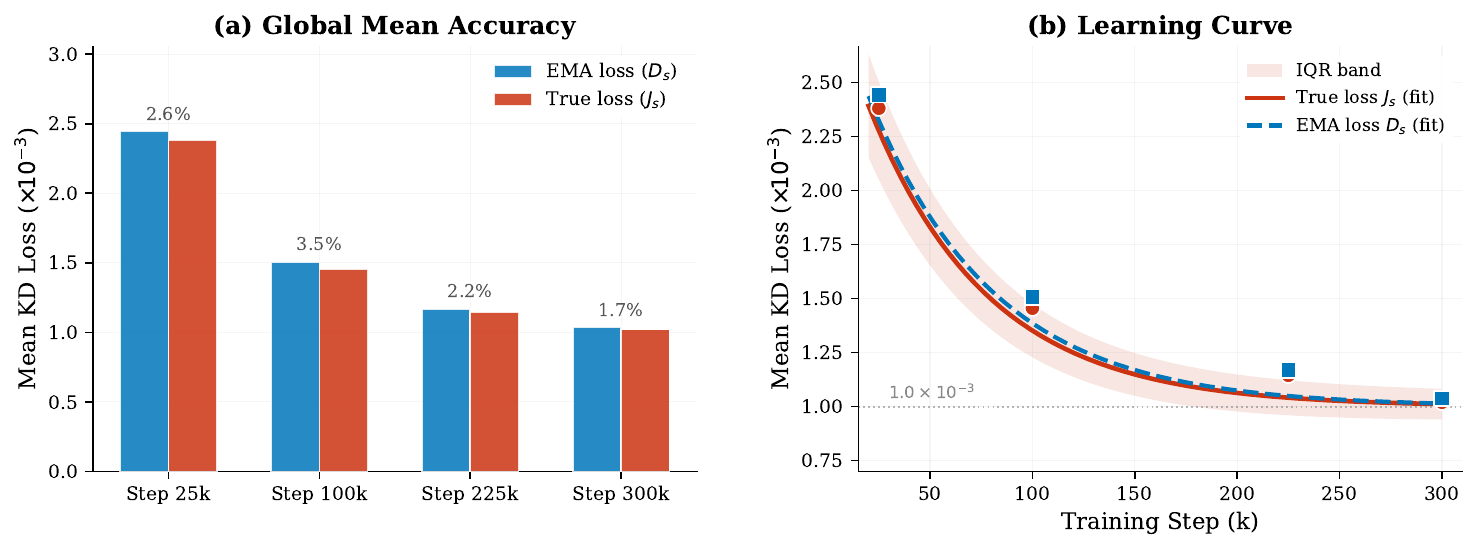}
    \caption{\textbf{EMA calibration study.}
(a) Comparison of the EMA-tracked mean cluster score $D_s$ and the Monte Carlo ground-truth loss $J_s$ at four checkpoints (25K, 100K, 225K, 300K). Percentages denote relative error. 
(b) Evolution of both quantities; the shaded band shows the interquartile range (IQR) of true per-cluster losses.
The EMA tracker closely follows the KD loss throughout training, with relative error below $4\%$.}
    \label{fig:ema_calibration}
\end{figure}

\end{document}